\newtheorem{assumptions}{Assumptions}
\newtheorem{theorem}{Theorem}
\newtheorem{lemma}{Lemma}
\newtheorem{corollary}{Corollary}
\newcommand{\ie}{{i.e.},\xspace}
\newcommand{\E}{\mathbb{E}}
\newcommand{\D}{\mathcal{D}}
\title{Learning One Convolutional Layer with Overlapping Patches}
\author[1]{Surbhi Goel\footnote{surbhi@cs.utexas.edu}}
\author[1]{Adam Klivans\footnote{klivans@cs.utexas.edu}}
\author[2]{Raghu Meka\footnote{raghum@cs.ucla.edu}}
\affil[1]{Department of Computer Science, University of Texas at Austin}
\affil[2]{Department of Computer Science, UCLA}
\date{}
\begin{document}
\maketitle

\begin{abstract}
We give the first provably efficient algorithm for learning a one hidden layer convolutional network with respect to a general class of (potentially overlapping) patches.  Additionally, our algorithm requires only mild conditions on the underlying distribution.  We prove that our framework captures commonly used schemes from computer vision, including one-dimensional and two-dimensional ``patch and stride'' convolutions.


Our algorithm-- {\em Convotron}-- is inspired by recent work applying isotonic regression to learning neural networks.  Convotron uses a simple, iterative update rule that is stochastic in nature and tolerant to noise (requires only that the conditional mean function is a one layer convolutional network, as opposed to the realizable setting).  In contrast to gradient descent, Convotron requires no special initialization or learning-rate tuning to converge to the global optimum. 

We also point out that learning one hidden convolutional layer with respect to a Gaussian distribution and just {\em one} disjoint patch $P$ (the other patches may be arbitrary) is {\em easy} in the following sense:  Convotron can efficiently recover the hidden weight vector by updating {\em only} in the direction of $P$.  

\end{abstract}
\newpage
\section{Introduction}
\label{sec:intro}
Developing \textit{provably} efficient algorithms for learning commonly used neural network architectures continues to be a core challenge in machine learning. The underlying difficulty arises from the highly non-convex nature of the optimization problems posed by neural networks. Obtaining provable guarantees for learning even very basic architectures remains open.

In this paper we consider a simple convolutional neural network with a single filter and overlapping patches followed by average pooling (Figure \ref{fig:cnn}). More formally, for an input image $x$, we consider $k$ {\em patches} of size $r$ indicated by {\em selection} matrices $P_1, \ldots, P_k \in \{0,1\}^{r \times n}$ where each matrix has exactly one $1$ in each row and at most one $1$ in each column. The neural network is computed as $f_{w}(x)= \frac{1}{k}\sum_{i=1}^k \sigma(w^T  P_i  x)$ where $\sigma$ is the activation function and $w \in \mathbb{R}^r$ is the weight vector corresponding to the convolution filter. We focus on ReLU and leaky ReLU activation functions.

\begin{figure}[ht]
\label{fig:cnn}
\centerline{\includegraphics[width=0.4\textwidth]{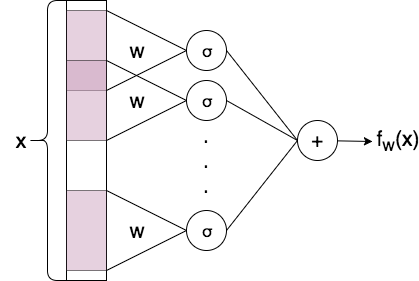}}
\caption{Architecture of convolutional network with one hidden layer and average pooling. Each purple rectangle corresponds to a patch.}
\end{figure}

\subsection{Our Contributions}
The main contribution of this paper is a simple, stochastic update algorithm {\em Convotron} (Algorithm \ref{alg:convotron}) for provably learning the above convolutional architecture. The algorithm has the following properties:
\begin{itemize}
	\item Works for general classes of overlapping patches and requires mild distributional conditions.	
        \item Proper recovery of the unknown weight vector. 
	\item Stochastic in nature with a ``gradient-like'' update step.
	\item Requires no special/random initialization scheme or tuning of the learning rate. 
        \item Tolerates noise and succeeds in the {\em probabilistic concept} model of learning.
        \item Logarithmic convergence in $1/\epsilon$, the error parameter, in the realizable setting.
\end{itemize}
This is the first efficient algorithm for learning general classes of overlapping patches (and the first algorithm for any class of patches that succeeds under mild distributional assumptions). Prior work has focused on analyzing SGD in the realizable/noiseless setting with the caveat of requiring either disjoint patches \cite{brutzkus2017globally,du2017gradient} with Gaussian inputs or technical conditions linking the underlying true parameters and the ``closeness of patches'' \cite{du2017convolutional}. 

In contrast, our conditions depend only on the patch structure itself and can be efficiently verified.  Commonly used patch structures in computer vision applications such as 1D/2D grids satisfy our conditions. Additionally, we require only that the underlying distribution on samples is symmetric and induces a covariance matrix on the patches with polynomially bounded condition number\footnote{Brutzkus and Globerson \cite{brutzkus2017globally} proved that the problem, even with disjoint patches, is NP-hard in general, and so some distributional assumption is needed for efficient learning.}. All prior work handles only continuous distributions. Another major difference from prior work is that we give guarantees using purely empirical updates.  That is, we do not require an assumption that we have access to exact quantities such as the population gradient of the loss function. 

We further show that in the commonly studied setting of Gaussian inputs and non-overlapping patches, updating with respect to a single non-overlapping patch is sufficient to guarantee convergence. This indicates that the Gaussian/no-overlap assumption is quite strong.

\subsection{Our Approach}
Our approach is to exploit the monotonicity of the activation function instead of the strong convexity of the loss surface. We use ideas from isotonic regression and extend them in the context of convolutional networks. These ideas have been successful for learning generalized linear models \cite{kakade2011efficient}, improperly learning fully connected, depth-three neural networks \cite{goel2017learning}, and learning graphical models \cite{klivans2017learning}.

\subsection{Related Work}
It is known that in the worst case, learning even simple neural networks is computationally intractable.  For example, in the non-realizable (agnostic) setting, it is known that learning a single ReLU (even for bounded distributions and unit norm hidden weight vectors) with respect to square-loss is as hard as learning sparse parity with noise \cite{goel2016reliably}, a notoriously difficult problem from computational learning theory.  For learning one hidden layer convolutional networks, Brutzkus and Globerson \cite{brutzkus2017globally} proved that distribution-free recoverability of the unknown weight vector is NP-hard, even if we restrict to disjoint patch structures.

As such, a major open question is to discover the mildest assumptions that lead to polynomial-time learnability for simple neural networks.  In this paper, we consider the very popular class of convolutional neural networks (for a summary of other recent approaches for learning more general architectures see \cite{goel2017eigenvalue}).  For convolutional networks, all prior research has focused on analyzing conditions under which (Stochastic) Gradient Descent converges to the hidden weight vector in polynomial-time.  

Along these lines, Brutzkus and Globerson \cite{brutzkus2017globally} proved that with respect to the spherical Gaussian distribution and for disjoint (non-overlapping) patch structures, gradient descent recovers the weight vector in polynomial-time.  Zhong et al. \cite{zhong2017a} showed that gradient descent combined with tensor methods can recover one hidden layer involving multiple weight vectors but still require a Gaussian distribution and non-overlapping patches.  Du et al. \cite{du2017gradient} proved that gradient descent recovers a hidden weight vector involved in a type of two-layer convolutional network under the assumption that the distribution is a spherical Gaussian, the patches are disjoint, and the learner has access to the true population gradient of the loss function.  

We specifically highlight the work of Du, Lee, and Tian \cite{du2017convolutional}, who proved that gradient descent recovers a hidden weight vector in a one-layer convolutional network under certain technical conditions that are more general than the Gaussian/no-overlap patch scenario.  Their conditions involve a certain ``alignment'' of the unknown patch structure, the hidden weight vector, and the (continuous) marginal distribution.  However, it is unclear which concrete patch-structure/distributional combinations their framework captures.  We also note that all of the above results assume there is no noise; \ie they work in the realizable setting.

Other related works analyzing gradient descent with respect to the Gaussian distribution (but for non-convolutional networks) include \cite{soltanolkotabi2017learning, ge2017, zhong2017recovery, tian2016symmetry, li2017convergence,zhang2017electron}.

In contrast, we consider an alternative to gradient descent, namely Convotron, that is based on isotonic regression.  The exploration of alternative algorithms to gradient descent is a feature of our work, as it may lead to new algorithms for learning deeper networks.

\section{Preliminaries}

$||\cdot||$ corresponds to the $l_2$ -norm for vectors and the spectral norm for matrices. The identity matrix is denoted by $I$. We denote the input-label distribution by $\D$ over input drawn from $\mathcal{X}$ and label drawn from $\mathcal{Y}$. The marginal distribution on the input is denoted by $\D_{\mathcal{X}}$ and the corresponding probability density function is denoted by $P_{\mathcal{X}}$.

In this paper we consider a simple convolution neural network with one hidden layer and average pooling. Given input $x \in \mathbb{R}^n$, the network computes $k$ patches of size $r$ where each patch's location is indicated by matrices $P_1, \ldots, P_k \in \{0,1\}^{r \times n}$. Each matrix $P_i$ has exactly one 1 in each row and at most one 1 in every column. As before, the neural network is computed as follows:
\[
f_{w}(x)= \frac{1}{k}\sum_{i=1}^k \sigma(w^T  P_i  x)
\]
where $\sigma$ is the activation function and $w \in \mathbb{R}^r$ is the weight vector corresponding to the convolution filter.

We study the problem of learning the teacher network under the square loss, that is, we wish to find a $w$ such that
\begin{align*}
L(w) := \E_{(x,y) \sim \D}[(y - f_{w}(x))^2] \leq \epsilon.
\end{align*}

\begin{assumptions}\label{assume}
We make the following assumptions:
\begin{enumerate}[label=(\alph*)]
	\item \textbf{Learning Model}: Probabilistic Concept Model
          \cite{kearns1990efficient}, that is, for all $(x,y) \sim \D$, $y
          = f_{w_*}(x) + \xi$, for some unknown $w_*$ where $\xi$ is
          noise with $\E[\xi|x] = 0$ and $\E[\xi^4|x] \leq \rho$ for
          some $\rho>0$. Note we do not require that the noise is
          independent of the instance.\footnote{In the realizable setting, as in previous works, it is assumed that $\xi = 0$.}
	\item \textbf{Distribution}: The marginal distribution on the
          input space $\D_\mathcal{X}$ is a symmetric distribution about the origin, that is, for all $x$, $P_{\mathcal{X}}(x) = P_{\mathcal{X}}(-x)$.
	\item \textbf{Patch Structure}: The minimum eigenvalue of $P_{\Sigma} := \sum_{i, j = 1}^{k} P_i \Sigma P_j^T$ where $\Sigma = \E_{x \sim \D_\mathcal{X}}[xx^T]$ and the maximum eigenvalue of $P:= \sum_{i, j = 1}^{k} P_i P_j^T$ are polynomially bounded.
	\item \textbf{Activation Function}: The activation function has the following form:
\[
\sigma(x) = 	\begin{cases}
					x & \text{if } x \geq 0 \\
					\alpha x & \text{ otherwise}
				\end{cases}
\]
for some constant $\alpha \in [0, 1]$.
\end{enumerate}
\end{assumptions}
The distributional assumption includes common assumptions such as
Gaussian inputs, but is far less restrictive.  For example, we do
not require the distribution to be continuous nor do we require it to
have identity covariance.  In Section
\ref{sec:patch}, we show that commonly used patch schemes from
computer vision satisfy our patch requirements. The assumption on
activation functions is satisfied by popular activations such as ReLU
($\alpha = 0$) and leaky ReLU ($\alpha > 0$).

\subsection{Some Useful Properties}

The activations we consider in this paper have the following useful property:

\begin{lemma}\label{lem:switch}
For all $a,b \in \mathbb{R}$,
\[
\E_{x \sim \D}[\sigma(a^T  x)(b^T  x)] = \frac{1 + \alpha}{2}\E_{x \sim \D}[(a^T  x)(b^T  x)].
\]
\end{lemma}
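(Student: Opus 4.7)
The plan is to exploit the fact that the leaky-ReLU activation $\sigma$ decomposes into a linear part and an absolute value part, and then use the symmetry of $\mathcal{D}_\mathcal{X}$ to kill the latter.

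First I would write, for every $t \in \mathbb{R}$,
\[
\sigma(t) = \frac{1+\alpha}{2}\, t + \frac{1-\alpha}{2}\, |t|,
\]
which is immediate by checking the two cases $t \ge 0$ and $t < 0$. Plugging this in with $t = a^T x$ and multiplying by $(b^T x)$ gives, by linearity,
\[
\E_{x}\bigl[\sigma(a^T x)(b^T x)\bigr]
= \frac{1+\alpha}{2}\,\E_{x}\bigl[(a^T x)(b^T x)\bigr]
+ \frac{1-\alpha}{2}\,\E_{x}\bigl[|a^T x|\,(b^T x)\bigr].
\]
So the whole lemma reduces to showing the cross term $\E_x[\,|a^T x|\,(b^T x)\,]$ vanishes.

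This is where I would invoke Assumption \ref{assume}(b). Since the density satisfies $P_\mathcal{X}(x) = P_\mathcal{X}(-x)$, the change of variables $x \mapsto -x$ preserves the measure, so
\[
\E_x\bigl[|a^T x|(b^T x)\bigr]
= \E_x\bigl[|a^T(-x)|\,(b^T(-x))\bigr]
= -\,\E_x\bigl[|a^T x|\,(b^T x)\bigr],
\]
which forces the expectation to be zero. Combining with the previous display yields the claim.

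There is no real obstacle here; the only subtle point is recognizing the decomposition of $\sigma$ as a linear plus $|\cdot|$ piece, since that is what converts the nonlinearity question into a symmetry question. Once that rewriting is in place, the symmetry of $\mathcal{D}_\mathcal{X}$ does all the work and no moment conditions beyond existence of $\E[(a^T x)(b^T x)]$ are required.
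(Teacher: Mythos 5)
Your proof is correct and is essentially the same argument as the paper's: your decomposition $\sigma(t) = \tfrac{1+\alpha}{2}t + \tfrac{1-\alpha}{2}|t|$ is just an explicit odd/even splitting of $\sigma$, equivalent to the paper's identity $\sigma(c)-\sigma(-c)=(1+\alpha)c$, and both arguments then use the symmetry $P_{\mathcal{X}}(x)=P_{\mathcal{X}}(-x)$ to eliminate the even (absolute-value) contribution. The only cosmetic difference is that the paper symmetrizes the expectation first and applies the identity afterward, whereas you decompose the activation first and then kill the cross term.
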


The loss function can be upper bounded by the $l_2$-norm distance of weight vectors using the following lemma.
\begin{lemma}
\label{lem:loss}
For any $w$, we have
\[
L(w) \leq \frac{(1 + \alpha)\lambda_{\max}(\Sigma)}{2}||w_* - w||^2.
\]
\end{lemma}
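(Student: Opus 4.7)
The plan is to reduce the lemma to a bound on $\E[(f_{w_*}(x) - f_w(x))^2]$. Using $y = f_{w_*}(x)+\xi$ with $\E[\xi|x]=0$, the cross term $\E[(f_{w_*}-f_w)\xi]$ vanishes and $L(w) = \E[(f_{w_*}-f_w)^2] + \E[\xi^2]$; the noise term is a $w$-independent constant (and zero in the realizable setting), so the lemma amounts to showing
\[
\E[(f_{w_*}(x)-f_w(x))^2] \leq \frac{(1+\alpha)\lambda_{\max}(\Sigma)}{2}\|w_*-w\|^2.
\]

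The main trick is the elementary identity $\sigma(t) = \frac{1+\alpha}{2}\, t + \frac{1-\alpha}{2}\,|t|$, verified by checking the two sign cases. Plugging this into $f_w$ decomposes it as $f_w = \frac{1+\alpha}{2} L_w + \frac{1-\alpha}{2} A_w$, where $L_w(x) = \frac{1}{k}\sum_i w^T P_i x$ is linear (hence odd in $x$) and $A_w(x) = \frac{1}{k}\sum_i |w^T P_i x|$ is even in $x$. Since $\D_{\X}$ is symmetric about the origin, any odd function has zero expectation, so the cross term in $(f_{w_*}-f_w)^2$ integrates to zero and
\[
\E[(f_{w_*}-f_w)^2] = \left(\tfrac{1+\alpha}{2}\right)^2 \E[(L_{w_*}-L_w)^2] + \left(\tfrac{1-\alpha}{2}\right)^2 \E[(A_{w_*}-A_w)^2].
\]

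Writing $\Delta = w_*-w$, I bound each piece by $\lambda_{\max}(\Sigma)\|\Delta\|^2$. For the linear piece, set $a = \sum_i P_i^T \Delta$ so that $\E[(L_{w_*}-L_w)^2] = \frac{1}{k^2}a^T \Sigma a \leq \frac{1}{k^2}\lambda_{\max}(\Sigma)\|a\|^2$; Cauchy--Schwarz together with $P_i P_i^T = I_r$ (each $P_i$'s rows are distinct standard basis vectors) gives $\|a\|^2 \leq k \sum_i \|P_i^T \Delta\|^2 = k^2 \|\Delta\|^2$. For the absolute-value piece, the pointwise inequality $\bigl||a|-|b|\bigr| \leq |a-b|$ combined with Cauchy--Schwarz yields $(A_{w_*}-A_w)(x)^2 \leq \frac{1}{k}\sum_i (\Delta^T P_i x)^2$, whose expectation is at most $\frac{1}{k}\sum_i \Delta^T P_i \Sigma P_i^T \Delta \leq \lambda_{\max}(\Sigma)\|\Delta\|^2$ by the same $P_i P_i^T = I_r$ observation.

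Combining the two bounds gives $\E[(f_{w_*}-f_w)^2] \leq \frac{(1+\alpha)^2+(1-\alpha)^2}{4}\lambda_{\max}(\Sigma)\|\Delta\|^2 = \frac{1+\alpha^2}{2}\lambda_{\max}(\Sigma)\|\Delta\|^2 \leq \frac{1+\alpha}{2}\lambda_{\max}(\Sigma)\|\Delta\|^2$, where the last step uses $\alpha^2 \leq \alpha$ for $\alpha \in [0,1]$. The main obstacle is not the calculus but locking in the right constant: the naive $1$-Lipschitz bound $|\sigma(a)-\sigma(b)| \leq |a-b|$ by itself yields only $\lambda_{\max}(\Sigma)\|\Delta\|^2$, losing a factor of up to $2/(1+\alpha)$; pairing the decomposition $\sigma(t)=\frac{1+\alpha}{2}t+\frac{1-\alpha}{2}|t|$ with the symmetry-induced odd/even orthogonality is what recovers the sharp coefficient.
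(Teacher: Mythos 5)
Your proof is correct, and it takes a genuinely different route from the paper's. The paper sandwiches $L(w)$: using monotonicity and $1$-Lipschitzness of $\sigma$ together with Jensen's inequality it shows $L(w) \leq \frac{1}{k}\sum_{i}\E[(\sigma(w_*^TP_ix)-\sigma(w^TP_ix))\,(w_*-w)^TP_ix]$, then evaluates that correlation exactly via Lemma~\ref{lem:switch} as $\frac{1+\alpha}{2k}\sum_i\E[((w_*-w)^TP_ix)^2]$ and bounds it by $\frac{(1+\alpha)\lambda_{\max}(\Sigma)}{2}\|w_*-w\|^2$. You instead expand the square directly after splitting $\sigma(t)=\frac{1+\alpha}{2}t+\frac{1-\alpha}{2}|t|$ into odd and even parts and letting symmetry of $\D_\X$ annihilate the cross term. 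The underlying algebraic identity is the same one that powers Lemma~\ref{lem:switch} (namely $\sigma(c)-\sigma(-c)=(1+\alpha)c$), so both arguments lean on distributional symmetry in essentially the same place; but your version is self-contained (it never needs the monotonicity/Jensen sandwich) and actually delivers the sharper constant $\frac{1+\alpha^2}{2}\leq\frac{1+\alpha}{2}$. The paper's route has the advantage of reusing the correlation quantity that also drives the progress bound in Theorem~\ref{thm:main}. One shared wrinkle: with nonzero noise, $L(w)=\E[(f_{w_*}-f_w)^2]+\E[\xi^2]$ cannot be bounded by a quantity that vanishes at $w=w_*$, so the lemma must be read as a bound on the excess loss $\E[(f_{w_*}-f_w)^2]$; the paper's proof makes this identification silently, and you flag it explicitly, which is the more careful reading.
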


\begin{lemma}
\label{lem:sq}
For all $w$ and $x$,
\[
(f_{w_*}(x) - f_{w}(x))^2 \leq ||w_* - w||^2||x||^2
\]
\end{lemma}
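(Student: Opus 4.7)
The plan is to exploit the fact that for $\alpha\in[0,1]$, the activation $\sigma$ is $1$-Lipschitz, i.e.\ $|\sigma(a)-\sigma(b)| \leq |a-b|$ for all $a,b\in\mathbb{R}$, since its slope is always between $\alpha$ and $1$. Combined with the triangle inequality applied to the average defining $f_w$, this should immediately reduce the problem to bounding the linear pre-activations $(w_*-w)^T P_i x$.

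Concretely, I would first write
\[
|f_{w_*}(x)-f_w(x)| \;\leq\; \frac{1}{k}\sum_{i=1}^k |\sigma(w_*^T P_i x)-\sigma(w^T P_i x)| \;\leq\; \frac{1}{k}\sum_{i=1}^k |(w_*-w)^T P_i x|,
\]
using the triangle inequality for the absolute value of an average and then Lipschitzness of $\sigma$ term by term. Next I would apply Cauchy--Schwarz to each summand to obtain $|(w_*-w)^T P_i x| \leq \|w_*-w\|\cdot\|P_i x\|$. The key observation about the patch structure is that each $P_i\in\{0,1\}^{r\times n}$ has one nonzero per row and at most one per column, so $P_i^T P_i$ is a diagonal $\{0,1\}$-matrix; hence $P_i x$ just picks out a subset of coordinates of $x$ (without repetition) and $\|P_i x\| \leq \|x\|$.

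Putting these bounds together yields $|f_{w_*}(x)-f_w(x)| \leq \|w_*-w\|\cdot\|x\|$, and squaring gives the claimed inequality. There is no real obstacle here: the statement is essentially just Lipschitzness of $\sigma$ plus the elementary fact that the selection matrices $P_i$ have operator norm at most $1$. The only point to be careful about is justifying the $1$-Lipschitz property uniformly for both ReLU ($\alpha=0$) and leaky ReLU ($\alpha\in(0,1]$), which follows from the slope lying in $[\alpha,1]\subseteq[0,1]$.
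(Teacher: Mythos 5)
Your proof is correct and follows essentially the same route as the paper's: decompose over patches, use the $1$-Lipschitzness of $\sigma$, and bound each term via Cauchy--Schwarz together with the fact that each selection matrix $P_i$ has operator norm $1$. The only cosmetic difference is that you apply the triangle inequality to the average and square at the end, whereas the paper applies Jensen's inequality directly to the square of the average; both yield the identical bound.
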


The Gershgorin Circle Theorem, stated below, is useful for bounding the eigenvalues of matrices.
\begin{theorem}[\cite{weisstein2003gershgorin}]\label{thm:circle}
For a $n \times n$ matrix $A$, define $R_i:= \sum_{j = 1, j \neq i}^n |A_{i,j}|$. Each eigenvalue of $A$ must lie in at least one of the disks $\{z: |z - A_{i,i}| \leq R_i\}$.
\end{theorem}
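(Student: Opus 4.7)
The statement is the classical Gershgorin Circle Theorem, cited from a reference, so the proof is standard, but here is how I would approach it.

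The plan is to argue coordinate-wise from an arbitrary eigenpair. Let $\lambda$ be any eigenvalue of $A$ with associated nonzero eigenvector $v \in \mathbb{C}^n$. Choose the index $i \in \{1,\dots,n\}$ at which $|v_i|$ is maximized; since $v \neq 0$, we have $|v_i| > 0$, and by construction $|v_j| \leq |v_i|$ for every $j$. The goal is to show that this particular $i$ provides a Gershgorin disk containing $\lambda$.

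Next, I would write out the $i$-th coordinate of the eigenvalue equation $Av = \lambda v$, which gives $\sum_{j=1}^n A_{i,j} v_j = \lambda v_i$. Separating the diagonal term and rearranging yields $(\lambda - A_{i,i}) v_i = \sum_{j \neq i} A_{i,j} v_j$. Taking absolute values and applying the triangle inequality gives
\[
|\lambda - A_{i,i}|\, |v_i| \;\leq\; \sum_{j \neq i} |A_{i,j}|\, |v_j| \;\leq\; |v_i| \sum_{j \neq i} |A_{i,j}| \;=\; R_i\, |v_i|,
\]
using the defining property of $i$ in the second inequality. Dividing through by $|v_i| > 0$ produces $|\lambda - A_{i,i}| \leq R_i$, which is exactly the statement that $\lambda$ lies in the $i$-th Gershgorin disk.

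Since $\lambda$ was an arbitrary eigenvalue, every eigenvalue lies in at least one of the disks $\{z : |z - A_{i,i}| \leq R_i\}$. There is essentially no obstacle here; the only subtlety worth flagging is the choice of $i$ as the index of maximum modulus of $v$, which is what allows the triangle-inequality bound to factor out $|v_i|$ cleanly. Because the result is standard and cited, the authors most likely omit the proof entirely and invoke it as a black box in later arguments that bound eigenvalues of the patch matrices $P_\Sigma$ and $P$ appearing in Assumption~\ref{assume}(c).
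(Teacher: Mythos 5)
Your proof is the standard and correct argument for the Gershgorin Circle Theorem: selecting the index of maximum modulus of the eigenvector is exactly the right device, and the triangle-inequality step goes through as you wrote it. The paper itself gives no proof --- it cites the result and uses it as a black box (in Lemmas \ref{lem:max}, \ref{lem:min}, \ref{lem:min1}, and \ref{lem:min2}) --- so there is nothing to compare against; your reconstruction is complete and matches the classical treatment in the cited reference.
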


\noindent\textbf{Note}: The proofs of lemmas in this section have been deferred to the Appendix.
\section{The Convotron Algorithm}

In this section we describe our main algorithm \textit{Convotron} and give a proof of its correctness. Convotron is an iterative algorithm similar in flavor to SGD with a modified (aggressive) gradient update. Unlike SGD (Algorithm \ref{alg:sgd}), Convotron comes with provable guarantees and also does not need a good initialization scheme for convergence.
\begin{algorithm}[tb]
   \caption{Convotron}
   \label{alg:convotron}
\begin{algorithmic}
   \STATE Initialize $w_1 := 0 \in \mathbb{R}^r$.
   \FOR{$t=1$ {\bfseries to} $T$}
   \STATE Draw $(x_t, y_t) \sim \mathcal{D}$
   \STATE Let $G_t = (y_t - f_{w_t}(x_t))  \left(\sum_{i=1}^k P_i  x_t\right)$
   \STATE Set $w_{t+1} = w_t + \eta G_t$
   \ENDFOR
   \STATE {Return $w_{T+1}$}
\end{algorithmic}
\end{algorithm}

The following theorem describes the convergence rate of our algorithm:

\begin{theorem}\label{thm:main}
If Assumptions \ref{assume} are satisfied then for $\eta = \Omega\left(\frac{\lambda_{\min}(P_\Sigma)}{k\lambda_{\max}(P)}\min \left(\frac{1}{\E_{x}[||x||^4]}, \frac{\epsilon ||w_*||^2}{\sqrt{\rho\E_{x}[||x||^4]}} \right)\right)$ and $T = O\left(\frac{k}{\eta\lambda_{\min}(P_\Sigma)} \log\left(\frac{1}{\epsilon \delta}\right)\right)$, with probability $1-\delta$, the weight vector $w$ computed by Convotron satisfies
\[
||w - w_*||^2 \leq \epsilon ||w_*||^2.
\]
\end{theorem}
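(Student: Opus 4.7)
The plan is to use $\Phi_t := \|w_t - w_*\|^2$ as a Lyapunov function and derive a one-step recursion of the form $\E[\Phi_{t+1}|w_t] \leq (1-c\eta)\Phi_t + c'\eta^2$, where the linear contraction comes from the expected update direction and the quadratic term absorbs the stochastic noise. Expanding the update,
\begin{align*}
\|w_{t+1}-w_*\|^2 = \Phi_t - 2\eta \langle w_*-w_t, G_t\rangle + \eta^2\|G_t\|^2,
\end{align*}
so I need (i) a lower bound on $\E[\langle w_*-w_t, G_t\rangle \mid w_t]$ proportional to $\Phi_t$, and (ii) an upper bound on $\E[\|G_t\|^2 \mid w_t]$.

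For (i), condition on $x_t$ first and use $\E[\xi_t\mid x_t]=0$ to drop the noise, leaving
\begin{align*}
\E[G_t\mid w_t] = \E_x\Bigl[(f_{w_*}(x)-f_{w_t}(x))\,\sum_{j=1}^k P_j x\Bigr].
\end{align*}
Expanding $f$ and applying Lemma~\ref{lem:switch} coordinatewise with $a = P_i^T w_*$ (or $P_i^T w_t$) and $b$ the coordinate projections of $P_j x$, each cross term collapses to $\frac{1+\alpha}{2} P_j \Sigma P_i^T(w_*-w_t)$. Summing over $i,j$ yields
\begin{align*}
\E[G_t\mid w_t] = \frac{1+\alpha}{2k}\, P_\Sigma (w_*-w_t),
\end{align*}
and hence $\E[\langle w_*-w_t, G_t\rangle \mid w_t] \geq \frac{(1+\alpha)\lambda_{\min}(P_\Sigma)}{2k}\Phi_t$. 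For (ii), split $(y_t-f_{w_t}(x_t))^2 \leq 2(f_{w_*}(x_t)-f_{w_t}(x_t))^2 + 2\xi_t^2$, bound $\|\sum_j P_j x\|^2 \leq \lambda_{\max}(P)\|x\|^2$ since $\lambda_{\max}(P)$ equals the squared spectral norm of $\sum_j P_j$, and apply Lemma~\ref{lem:sq} to the signal part and Cauchy–Schwarz (with $\E[\xi^4\mid x]\leq\rho$) to the noise part, giving
\begin{align*}
\E[\|G_t\|^2\mid w_t] \leq 2\lambda_{\max}(P)\bigl(\E[\|x\|^4]\,\Phi_t + \sqrt{\rho\,\E[\|x\|^4]}\bigr).
\end{align*}

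Plugging both bounds into the expansion gives the recursion $\E[\Phi_{t+1}\mid w_t] \leq (1-c_1\eta+c_2\eta^2)\Phi_t + c_3\eta^2$ with $c_1 = \tfrac{(1+\alpha)\lambda_{\min}(P_\Sigma)}{k}$, $c_2 = 2\lambda_{\max}(P)\E[\|x\|^4]$, and $c_3 = 2\lambda_{\max}(P)\sqrt{\rho\E[\|x\|^4]}$. The first branch of the $\eta$ bound in the theorem is chosen so that $c_2\eta \leq c_1/2$, yielding contraction factor $1 - c_1\eta/2$; the second branch forces the stationary error $c_3\eta/(c_1/2)$ below $\epsilon\|w_*\|^2/2$. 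Iterating the recursion and initializing at $w_1 = 0$ (so $\Phi_1 = \|w_*\|^2$), after $T = O\bigl(\tfrac{1}{c_1\eta}\log\tfrac{1}{\epsilon\delta}\bigr) = O\bigl(\tfrac{k}{\eta\lambda_{\min}(P_\Sigma)}\log\tfrac{1}{\epsilon\delta}\bigr)$ steps we get $\E[\Phi_T] \leq \epsilon\delta\|w_*\|^2$; a final Markov inequality upgrades this to the claimed high-probability bound. The main subtlety, and what dictates both branches of the step-size, is cleanly separating the variance of $G_t$ into a piece that scales with $\Phi_t$ (and must be dominated by the drift) and a piece driven purely by the label noise (which sets the floor); everything else is algebra once the key identity $\E[G_t\mid w_t] = \tfrac{1+\alpha}{2k}P_\Sigma(w_*-w_t)$ is in hand.
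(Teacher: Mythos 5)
Your proposal is correct and follows essentially the same route as the paper's proof: the same Lyapunov expansion of $\|w_{t+1}-w_*\|^2$, the same use of Lemma~\ref{lem:switch} to obtain $\E[G_t\mid w_t]=\frac{1+\alpha}{2k}P_\Sigma(w_*-w_t)$, the same variance bound via $\lambda_{\max}(P)$, Lemma~\ref{lem:sq}, and Cauchy--Schwarz, and the same two-branch choice of $\eta$ followed by iteration and Markov. The only cosmetic difference is that you bound the variance with the split $(a+\xi)^2\le 2a^2+2\xi^2$, whereas the paper expands the square and uses $\E[\xi\mid x]=0$ to kill the cross term exactly, saving a factor of $2$ that is immaterial to the stated $\Omega(\cdot)$ and $O(\cdot)$ bounds.
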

\begin{proof}
Define $S_t = \{(x_1,y_1), \ldots, (x_t,y_t)\}$ The dynamics of Convotron can be expressed as follows:
\[
\E_{x_t, y_t}[||w_t - w_*||^2 - ||w_{t+1} - w_*||^2|S_{t-1}] = 2 \eta\E_{x_t, y_t}[(w_* - w_t)^T  G_t| S_{t-1}] - \eta^2 \E_{x_t, y_t}[||G_t||^2|S_{t-1}].
\]

We need to bound the RHS of the above equation. We have,
\begin{align}
\E_{x_t, y_t}[(w_* - w_t)^T  G_t | S_{t-1}] & = \E_{x_t, y_t}\left[(w_* - w_t)^T  (y_t - f_{w_t}(x_t))  \left(\sum_{i=1}^k P_i  x_t\right) \middle| S_{t-1}\right]\nonumber\\
& = \E_{x_t, \xi_t}\left[(w_* - w_t)^T  (f_{w_*}(x_t) + \xi_t  - f_{w_t}(x_t))  \left(\sum_{i=1}^k P_i  x_t\right) \middle| S_{t-1}\right]\nonumber\\
& = \E_{x_t}\left[(w_* - w_t)^T  (f_{w_*}(x_t) - f_{w_t}(x_t))  \left(\sum_{i=1}^k P_i  x_t\right) \middle| S_{t-1}\right] \label{eq:mean0}\\
&=\frac{1}{k}\sum_{1 \leq i,j \leq k} \E_{x_t}[(\sigma(w_*^T  P_i  x_t) - \sigma(w_t^T  P_i  x_t)) (w_*^T - w_t^T)  P_j  x_t | S_{t-1}]\nonumber\\
&= \frac{1 + \alpha}{2k}\sum_{1 \leq i,j \leq k}\E_{x_t}[((w_*^T - w_t^T)  P_i  x_t) ((w_*^T - w_t^T)  P_j  x_t) | S_{t-1}] \label{eq:useswitch}\\
&= \frac{1 + \alpha}{2k}(w_*^T - w_t^T)  \left(\sum_{1 \leq i \leq k} P_i\right)  \E_{x_t}[x_t x_t^T] \left(\sum_{1 \leq j \leq k} P_j^T\right)(w_* - w_t)\nonumber \\
& = \frac{1 + \alpha}{2k}(w_*^T - w_t^T)  \left(\sum_{1 \leq i, j \leq k} P_i \Sigma P_j^T\right) (w_* - w_t)\nonumber\\
& = \frac{1 + \alpha}{2k}(w_*^T - w_t^T)  P_{\Sigma} (w_* - w_t)\nonumber\\
& \geq \frac{1 + \alpha}{2k}\lambda_{\min}(P_\Sigma)||w_* - w_t||^2 \label{eq:lambda}.
\end{align}

(\ref{eq:mean0}) follows using linearity of expectation and the fact that that $\E[\xi_t|x_t] = 0$ and (\ref{eq:useswitch}) follows from using Lemma \ref{lem:switch}. (\ref{eq:lambda}) follows from observing that $P_{\Sigma}$ is symmetric, thus $\forall x, x^TP_{\Sigma}x \geq \lambda_{\min}(P_{\Sigma})||x||^2$. 

Now we bound the variance of $G_t$. Note that $\E[G_t] = 0$. Further,
\begin{align}
\E_{x_t, y_t}[||G_t||^2|S_{t-1}] &= \E_{x_t, y_t}\left[(y_t - f_{w_t}(x_t))^2\left|\left|\sum_{i=1}^k P_i  x_t\right|\right|^2 \middle| S_{t-1}\right]\nonumber\\
&\leq \lambda_{\max}(P) \E_{x_t, y_t}\left[(y_t - f_{w_t}(x_t))^2||x_t||^2 \middle| S_{t-1}\right]\label{eq:norm}\\
&= \lambda_{\max}(P)\E_{x_t, \xi_t}\left[(f_{w_*}(x_t) + \xi_t - f_{w_t}(x_t))^2||x_t||^2 \middle| S_{t-1}\right]\nonumber\\
&= \lambda_{\max}(P)\E_{x_t, \xi_t}\left[((f_{w_*}(x_t) - f_{w_t}(x_t))^2 + \xi_t^2 + 2 (f_{w_*}(x_t) - f_{w_t}(x_t)) \xi_t||x_t||^2 \middle| S_{t-1}\right]\nonumber\\
&= \lambda_{\max}(P) \left(\E_{x_t}\left[ (f_{w_*}(x_t) - f_{w_t}(x_t))^2||x_t||^2 \middle| S_{t-1}\right] + \E_{x_t, \xi_t}[\xi_t^2||x_t||^2] \right)\label{eq:mean00}\\
&\leq \lambda_{\max}(P) \left(\E_{x_t}[||x_t||^4] ||w_* - w_t||^2 + \sqrt{\rho\E_{x_t}[||x_t||^4]}\right) \label{eq:uselem}
\end{align}
(\ref{eq:norm}) follows from observing that $\left|\left|\sum_{i=1}^k P_i x\right|\right|^2 \leq \lambda_{\max}(P) ||x||^2$ for all $x$, (\ref{eq:mean00}) follows from observing that $\E_{\xi}[\xi|x] = 0$ and (\ref{eq:uselem}) follows from using Lemma $\ref{lem:sq}$ and bounding $\E_{x_t, \xi_t}[\xi_t^2||x_t||^2]$ using Cauchy-Schwartz inequality.

Combining the above equations and taking expectation over $S_{t-1}$, we get 
\[
\E_{S_t}[||w_{t+1} - w_*||^2] \leq (1 - 3\eta \beta + \eta^2 \gamma)\E_{S_{t-1}}[||w_t - w_*||^2] + \eta^2B
\]
for $\beta = \frac{(1 + \alpha)\lambda_{\min}(P_\Sigma)}{3k}$, $\gamma = \lambda_{\max}(P)\E_{x}[||x||^4]$ and $B = \lambda_{\max}(P)\sqrt{\rho\E_{x}[||x||^4]}$.

We set $\eta = \beta \min \left(\frac{1}{\gamma}, \frac{\epsilon||w_*||^2}{B} \right)$ and break the analysis to two cases:
\begin{itemize}
   \item \textbf{Case 1}: $\E_{S_{t-1}}[||w_t - w_*||^2] > \frac{\eta B}{\beta}$. This implies that $\E_{S_t}[||w_{t+1} - w_*||^2] \leq (1 - \eta \beta)\E_{S_{t-1}}[||w_t - w_*||^2]$.
   \item \textbf{Case 2}: $\E_{S_{t-1}}[||w_t - w_*||^2] \leq \frac{\eta B}{\beta} \leq \epsilon ||w_*||^2$.
\end{itemize}
Observe that once Case 2 is satisfied, we have $\E_{S_t}[||w_{t+1} - w_*||^2] \leq (1 - 2\eta \beta)\frac{\eta B}{\beta} + \eta^2B \leq \frac{\eta B}{\beta}$. Hence, for any iteration $>t$, Case 2 will continue to hold true. This implies that either at each iteration $\E_{S_{t-1}}[||w_t - w_*||^2]$ decreases by a factor $(1 - \eta \beta)$ or it is less than $\epsilon ||w_*||^2$. Thus if Case 1 is not satisfied for any iteration up to $T$, then we have,
\[
\E_{X_T}[||w_{T+1} - w||^2] \leq \left(1 - \eta \beta\right)^T ||w_*||^2 \leq e^{-\eta \beta T}||w_*||^2
\]
since at initialization $||w_1 - w_*|| = ||w_*||$. Setting $T = O\left(\frac{1}{\eta \beta} \log\left(\frac{1}{\epsilon \delta}\right)\right)$ and using Markov's inequality, with probability $1- \delta$, over the choice of $S_T$,
\[
||w_{T+1} - w_*|| \leq \epsilon ||w_*||^2.
\]
\end{proof}
By using Lemma \ref{lem:loss}, we can get a bound on $L(w_T) \leq \epsilon ||w_*||^2$ by appropriately scaling $\epsilon$.

\subsection{Convotron in the Realizable Case}
For the realizable (no noise) setting, that is, for all $(x,y) \sim \D$, $y = f_{w_*}(x)$, for some unknown $w_*$, Convotron achieves faster convergence rates.

\begin{corollary}\label{cor:realizable}
If Assumptions \ref{assume} are satisfied with the learning model restricted to the realizable case, then for suitably choosen $\eta$, after $T = O\left(\frac{k^2 \lambda_{\max}(P)\E_{x}[||x||^4]}{\lambda_{\min}(P_\Sigma)^2}\log\left(\frac{1}{\epsilon \delta}\right)\right)$ iterations, with probability $1-\delta$, the weight vector $w$ computed by Convotron satisfies
\[
||w - w_*||^2 \leq \epsilon ||w_*||^2.
\]
\end{corollary}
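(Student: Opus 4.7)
The plan is to specialize the recurrence derived in the proof of Theorem \ref{thm:main} to the noiseless setting. The only place noise entered that analysis was via the term $\E_{x_t,\xi_t}[\xi_t^2 \|x_t\|^2]$ in the bound on $\E[\|G_t\|^2 \mid S_{t-1}]$, which produced the additive constant $B = \lambda_{\max}(P)\sqrt{\rho\,\E_x[\|x\|^4]}$. In the realizable case $\xi \equiv 0$, so that term vanishes and we obtain the purely multiplicative recurrence
\[
\E_{S_t}[\|w_{t+1} - w_*\|^2] \leq (1 - 3\eta\beta + \eta^2 \gamma)\,\E_{S_{t-1}}[\|w_t - w_*\|^2],
\]
with $\beta = \frac{(1+\alpha)\lambda_{\min}(P_\Sigma)}{3k}$ and $\gamma = \lambda_{\max}(P)\E_x[\|x\|^4]$, exactly as in the main theorem.

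Next, I would optimize the contraction factor by choosing $\eta$ of order $\beta/\gamma$ (the precise constant is unimportant; any small enough multiple works). With that choice the factor becomes $1 - \Theta(\beta^2/\gamma)$, yielding
\[
\E_{S_T}[\|w_{T+1} - w_*\|^2] \leq \left(1 - \Theta\!\left(\frac{\beta^2}{\gamma}\right)\right)^{T} \|w_*\|^2 \leq \exp\!\left(-\Theta\!\left(\frac{T\beta^2}{\gamma}\right)\right)\|w_*\|^2,
\]
using $w_1 = 0$ so that $\|w_1 - w_*\| = \|w_*\|$. Setting this bound to be at most $\epsilon\delta\|w_*\|^2$ and invoking Markov's inequality, as in the proof of Theorem \ref{thm:main}, yields the claimed bound in probability after $T = O\!\left(\frac{\gamma}{\beta^2}\log\frac{1}{\epsilon\delta}\right)$ steps.

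Finally, substituting the definitions of $\beta$ and $\gamma$ gives
\[
\frac{\gamma}{\beta^2} = O\!\left(\frac{k^2\,\lambda_{\max}(P)\,\E_x[\|x\|^4]}{\lambda_{\min}(P_\Sigma)^2}\right),
\]
which matches the iteration count stated in the corollary. There is no real obstacle here: since $B=0$, we do not need the two-case argument (Case 1 / Case 2) of Theorem \ref{thm:main} to absorb the variance term, so the analysis collapses to a clean geometric decay. The only mild subtlety is choosing $\eta$ small enough that $\eta^2\gamma$ is dominated by $3\eta\beta$; this is automatic for $\eta \lesssim \beta/\gamma$, and gives the logarithmic dependence on $1/\epsilon$ in place of the $1/\epsilon$ that arose from the noise in the general case.
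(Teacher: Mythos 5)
Your proposal is correct and follows essentially the same route as the paper: the paper simply sets $\rho=0$ in Theorem \ref{thm:main}, which kills the additive term $B$, forces $\eta=\Theta(\beta/\gamma)$, and yields $T=O\bigl(\tfrac{\gamma}{\beta^2}\log\tfrac{1}{\epsilon\delta}\bigr)$ — exactly the geometric-decay argument you spell out in more detail. Your additional observation that the two-case analysis collapses when $B=0$ is accurate but does not change the substance of the argument.
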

\begin{proof}
Since the setting has no noise, $\rho = 0$. Setting that parameter in Theorem \ref{thm:main} gives us $\eta = \Omega \left(\frac{\lambda_{\min}(P_\Sigma)}{k\lambda_{\max}(P)\E_{x}[||x||^4]}\right)$ as $\frac{\epsilon ||w_*||^2}{\sqrt{\rho\E_{x}[||x||^4]}}$ tends to infinity as $\rho$ tends to 0 and taking the minimum removes this dependence from $\eta$. Substituting this $\eta$ gives us the required result.
\end{proof}
Observe that the dependence of $\epsilon$ in the convergence rate is $\log(1/\epsilon)$ for the realizable setting, compared to the $1/\epsilon$ dependence in the noisy setting.
\section{Which Patch Structures are Easy to Learn?} \label{sec:patch}
In this section, we will show that the commonly used convolutional filters in practice (``patch and stride'') have good eigenvalues giving us fast convergence by Theorem \ref{thm:main}. We will start with the 1D case and then subsequently extend the result for the 2D case.
\subsection{1D Convolution}
Here we formally describe a patch and stride
convolution in the one-dimensional setting.  Consider a 1D image of dimension $n$. Let the patch size be $r$ and
stride be $d$. Let the patches be indexed from 1 and let patch $i$
start at position $(i - 1) d + 1$ and be contiguous through position $(i - 1) d + r$. The matrix $P_i$ of dimension $r \times n$ corresponding to patch $i$ looks as follows,
\[
P_i = \left(0_{r \times ((i - 1) d + 1)} I_{r} 0_{r \times (n - r - (i - 1) d) }\right)
\]
where $0_{a \times b}$ indicates a matrix of dimension $a \times b$ with all zeros and $I_{a}$ indicates the identity matrix of size $a$.

Thus, the total number of patches is $k = \lfloor \frac{n-r}{d} \rfloor + 1$. We will assume that $n \geq 2r - 1$ and $r \geq d$. The latter condition is to ensure there is some overlap, non-overlapping case, which is easier, is handled in the next section.

We will bound the extremal eigenvalues of $P = \sum_{i,j = 1}^{k} P_iP_j^T$. Simple algebra gives us the following structure for $P$,
\[
P_{i,j} = 	\begin{cases}
				k - a &\text{if } |i-j| = ad  \\
				0 &\text{otherwise}
			\end{cases}
\]

For understanding, we show the matrix structure for $d = 1$ and $n \geq 2 r$.
\[
\begin{pmatrix} 
    k   		& k - 1         & \dots     & k - r + 1 \\
    k - 1       & k     & \dots     & k - r + 2 \\
    \vdots      & \vdots        & \ddots    & \vdots  \\
    k - r + 1  & k - r + 2    & \dots     & k
    \end{pmatrix}.
\]

\subsubsection{Bounding Extremal Eigenvalues of $P$}
The following lemmas bound the extremal eigenvalues of $P$.
\begin{lemma}\label{lem:max}
Maximum eigenvalue of $P$ satisfies $\lambda_{\max}(P) \leq k (p+1) - (p - p_2)(p_2 + 1) = O(kp)$ where $p = \lfloor\frac{r - 1}{d}\rfloor$ and $p_2 =  \lfloor\frac{p}{2}\rfloor$.
\end{lemma}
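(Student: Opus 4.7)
The plan is to exploit a block-diagonal structure of $P$ induced by the stride $d$, reducing the eigenvalue problem to a Gershgorin bound on a single small Toeplitz block. First I would observe that $P_{u,v}$ is nonzero only when $d \mid (v-u)$, so $P$ preserves each ``residue-class'' subspace
\[
V_c = \mathrm{span}\{e_u : 1 \leq u \leq r,\; u \equiv c \pmod d\}, \qquad c \in \{1, \ldots, d\}.
\]
Hence $\mathbb{R}^r = V_1 \oplus \cdots \oplus V_d$ yields $\lambda_{\max}(P) = \max_c \lambda_{\max}(P|_{V_c})$. In the natural ordering of indices in $V_c$, the restriction $P|_{V_c}$ is the $r_c \times r_c$ matrix with entries $k - |s - t|$, where $r_c = \#\{u \in [1,r] : u \equiv c \pmod d\} \in \{p, p+1\}$; these entries are nonnegative since $|s - t| \leq r_c - 1 \leq p < k$. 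Because smaller blocks are principal submatrices of larger ones and all entries are nonnegative, $\lambda_{\max}(P|_{V_c})$ is maximized when $r_c = p + 1$, so it suffices to bound $\lambda_{\max}(M)$ for the $(p+1) \times (p+1)$ matrix $M$ with $M_{s,t} = k - |s-t|$.

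Next I would apply Gershgorin's Circle Theorem (Theorem~\ref{thm:circle}) to $M$. Since all entries are nonnegative, the bound collapses to the maximum row sum:
\[
\lambda_{\max}(M) \leq \max_{s \in [1,p+1]} \sum_{t=1}^{p+1} (k - |s-t|) = k(p+1) - \min_{s \in [1,p+1]} \sum_{t=1}^{p+1} |s - t|.
\]
The minimizer is the median index $s^\star = p_2 + 1$, splitting $[1, p+1]$ into parts of sizes $p_2$ and $p - p_2$. A direct summation gives
\[
\sum_{t=1}^{p+1} |s^\star - t| = \frac{p_2(p_2+1)}{2} + \frac{(p-p_2)(p-p_2+1)}{2} = (p - p_2)(p_2+1),
\]
the last equality being a quick parity check (both halves coincide when $p = 2p_2$, and they sum to $(p_2+1)^2 = (p - p_2)(p_2+1)$ when $p = 2p_2 + 1$). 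Combining the two displays gives $\lambda_{\max}(P) \leq k(p+1) - (p - p_2)(p_2 + 1) = O(kp)$, as claimed.

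The only real subtlety is noticing the block-diagonalization: Gershgorin applied naively to the full $r \times r$ matrix $P$ picks up all $2p + 1$ nonzero bands per row and yields only the much weaker bound $k(2p+1) - p(p+1)$, off by nearly a factor of two. Once the residue-class decoupling is recognized, everything reduces to the elementary median calculation above.
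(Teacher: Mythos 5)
Your proof is correct and, at its core, it is the same argument as the paper's: a Gershgorin row-sum bound in which the worst row is the one whose band of nonzero entries is centered on the diagonal, followed by the median computation $\min_s \sum_t |s-t| = (p-p_2)(p_2+1)$. The block-diagonalization into residue classes mod $d$ is sound (the $V_c$ are orthogonal invariant subspaces, and reducing to the $(p+1)\times(p+1)$ block via interlacing is fine), but it buys you nothing here, and your closing claim that it is the essential step is mistaken: row $i$ of the full $r\times r$ matrix $P$ has nonzero entries only at columns $j\equiv i \pmod d$ lying in $[1,r]$, of which there are $\lfloor (i-1)/d\rfloor + \lfloor (r-i)/d\rfloor + 1 \leq \lfloor (r-1)/d\rfloor + 1 = p+1$, not $2p+1$. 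Hence the row sums of $P$ coincide with the row sums of your blocks, and Gershgorin applied directly to $P$ --- which is exactly what the paper does --- already yields $k(p+1) - (p-p_2)(p_2+1)$. One small point worth making explicit in your writeup: the positivity $k > p$ that you invoke for the entries $k - |s-t|$ follows from the standing assumption $n \geq 2r-1$, which gives $k = \lfloor (n-r)/d\rfloor + 1 \geq \lfloor (r-1)/d\rfloor + 1 = p+1$.
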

\begin{proof}
Using Theorem \ref{thm:circle}, we have $\lambda_{\max}(P) \leq \max_i \left(P_{i,i} + \sum_{j \neq i}|P_{i,j}|\right) = \max_i \sum_{j=1}^{k}P_{i,j}$. Observe that $P$ is bisymmetric thus $\sum_{j=1}^{k}P_{i,j} = \sum_{j=1}^{k}P_{r-i + 1,j}$ and we can restrict to the top half of the matrix. The structure of $P$ indicates that in a fixed row, the diagonal entry is maximum and the non-zero entries decrease monotonically by 1 as we move away from the diagonal. Also, there can be at most $p + 1$ non-zero entries in any row. Thus the sum is maximized when there are $p+1$ non-zero entries and the diagonal entry is the middle entry, that is at position $p_2d + 1$. By simple algebra,
\[
\lambda_{\max}(P) \leq \sum_{j=1}^{k}P_{p_2d + 1,j} = k + 2\sum_{j = 1}^{p_2} (k - j) + (p - 2p_2)(k - p_2 - 1) = k (p+1) - (p - p_2)(p_2 + 1).
\]
\end{proof}

\begin{lemma}\label{lem:min}
Minimum eigenvalue of $P$ satisfies $\lambda_{\min}(P) \geq 0.5$.
\end{lemma}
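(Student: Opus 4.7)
The plan is to factor $P = QQ^T$, exploit a residue-class block-diagonalization to reduce to a Toeplitz matrix $T_s$ with $(T_s)_{ij} = k - |i-j|$, and then bound $\lambda_{\min}(T_s)$ using a discrete Poincar\'e-type inequality on a path graph.

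First, setting $Q = \sum_{i=1}^k P_i \in \{0,1\}^{r \times n}$, row $a$ of $Q$ has ones exactly in columns $a, a+d, \ldots, a+(k-1)d$, all sharing the residue class $a \pmod d$. Hence $(QQ^T)_{a,a'} = 0$ whenever $a \not\equiv a' \pmod d$, so $P$ is block diagonal after permuting indices by residue class. A direct entrywise computation shows that the block for residue class $m$, once reindexed locally, is the $s \times s$ Toeplitz matrix $T_s$ with $(T_s)_{ij} = k - |i-j|$, where $s$ counts the elements of $\{a \in [r] : a \equiv m \pmod d\}$. Since $s \le p+1$ and the hypothesis $n \ge 2r-1$ forces $k \ge p+1$, each block has $s \le k$, and it suffices to prove $\lambda_{\min}(T_s) \ge 1/2$ for every $s \le k$.

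Next, I would factor $T_s = M M^T$, where $M \in \{0,1\}^{s \times (s+k-1)}$ is the sliding-window matrix defined by $M_{ij} = 1$ iff $i \le j \le i+k-1$. For $u \in \mathbb{R}^s$, set $v = M^T u$ and extend by $v_0 = v_{s+k} = 0$. Writing $S_j = u_1 + \cdots + u_j$, the three regimes give $v_j = S_j$ for $1 \le j \le s$, $v_j = S_s$ for $s \le j \le k$, and $v_j = S_s - S_{j-k}$ for $k \le j \le k+s-1$. The key step is to compute the consecutive differences, which simplify to the sequence $(u_1, u_2, \ldots, u_s, 0, \ldots, 0, -u_1, \ldots, -u_s)$: a positive copy of $u$ during the ramp-up, $k-s$ zeros in the flat middle, then a negated copy during the ramp-down that forces $v_{s+k} = 0$. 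Consequently $\sum_{j=1}^{s+k} (v_j - v_{j-1})^2 = 2 \|u\|^2$.

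To finish, I would bound $v^T L' v$, where $L'$ is the interior tridiagonal Laplacian on the path $\{0, 1, \ldots, s+k\}$ with Dirichlet boundary, so that $v^T L' v = \sum_{j=1}^{s+k}(v_j - v_{j-1})^2$. Since $L'$ has $2$'s on the diagonal and $-1$'s off-diagonal, Theorem~\ref{thm:circle} (Gershgorin) gives $\lambda_{\max}(L') \le 4$, so $2 \|u\|^2 = v^T L' v \le 4 \|v\|^2 = 4\, u^T T_s u$, i.e., $u^T T_s u \ge \|u\|^2/2$, proving $\lambda_{\min}(T_s) \ge 1/2$ and hence $\lambda_{\min}(P) \ge 1/2$. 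The step I expect to be delicate is the piecewise calculation of the consecutive differences of $v_j$: one has to combine the three regimes carefully and notice that the flat middle regime is precisely what causes the differences to decouple into a positive and a negated copy of $u$ separated by zeros. Once that identity is in place, the block-diagonal reduction, the Gram factorization $T_s = M M^T$, and the Gershgorin bound on $L'$ are each essentially mechanical.
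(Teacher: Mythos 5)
Your proof is correct, and it takes a genuinely different route from the one in the paper. The paper splits into cases on the stride: for $d \geq r/2$ it applies Gershgorin directly to $P$ (each row then has only two nonzero entries, $k$ and $k-1$), while for $d < r/2$ it writes down an explicit candidate for $P^{-1}$ (with two further sub-cases, $d < r/3$ and $r/3 \leq d < r/2$, in Lemmas \ref{lem:min1} and \ref{lem:min2}), verifies $PP^{-1}=I$ entry by entry, and bounds $\lambda_{\max}(P^{-1}) \leq 2$ by Gershgorin. Your argument instead lower-bounds the quadratic form directly and uniformly: the factorization $P = QQ^T$ with $Q=\sum_i P_i$, the residue-class block-diagonalization to the Toeplitz blocks $T_s$ with $(T_s)_{ij}=k-|i-j|$, the further Gram factorization $T_s = MM^T$, and the identity that the consecutive differences of $v = M^T u$ are $(u_1,\dots,u_s,0,\dots,0,-u_1,\dots,-u_s)$ are all correct (I checked the regime boundaries at $j=s$ and $j=k$, and the inequality $s \leq p+1 \leq k$ that makes the three regimes well-ordered does follow from $n \geq 2r-1$). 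The final step $2\|u\|^2 = v^T L' v \leq 4\|v\|^2 = 4\,u^T T_s u$ is a clean discrete Poincar\'e-type bound. What your approach buys: no case analysis on $d$, no need to guess or verify an explicit inverse, and a template that would extend to other banded patch structures; what the paper's approach buys is the exact form of $P^{-1}$, which is of independent interest and pins down the constants more explicitly. One could even sharpen your constant slightly, since $\lambda_{\max}(L') = 4\sin^2\left(\frac{(s+k-1)\pi}{2(s+k)}\right) < 4$ strictly, but $1/2$ is all that is needed here.
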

\begin{proof}
We break the analysis into following two cases:

\noindent\textbf{Case 1: $d < r/2$}\\
We can show that $\lambda_{\max}(P^{-1}) \geq 2$ using the structure of $P$ (see Lemma \ref{lem:min1} and \ref{lem:min2}). Since $\lambda_{\min}(P) = 1/\lambda_{\max}(P^{-1})$, we have $\lambda_{\min}(P) \geq 0.5$.

\noindent\textbf{Case 2: $d \geq r/2$}\\
In this case we directly bound the minimum eigenvalue of $P$. Using Theorem \ref{thm:circle}, we know that $\lambda_{\min}(P) \geq \min_i\left( P_{i,i} - \sum_{j \neq i}|P_{i,j}|\right)$. For $P_{i,j} \neq 0$, $|i - j| = ad$ for some $a$. The maximum value that $|i-j|$ can take is $r-1$ and since $d \geq r/2$, $a$ must be either 0 or 1. Also, for any $i$, there exists a unique $j$ such that $|i-j| = d$ since $r/2 \leq d < r$, thus there are exactly 2 non-zero entries in each row of $P$, $P_{i,i}$. This gives us, for each $i$, $\sum_{j \neq i} P_{i,j} = k - 1$. Thus, we get that $\lambda_{\min}(P) \geq \min_i \left(P_{i,i} - \left|\sum_{j \neq i} P_{i,j}\right|\right) = k - (k - 1) = 1$.

Combining both, we get the required result.
\end{proof}

\subsubsection{Learning Result for 1D}
Augmenting the above analysis with Theorem \ref{thm:main} gives us learnability of 1D convolution filters.
\begin{corollary}\label{cor:1D}
\sloppy
If Assumptions 1(a),(b), and (d) are satisfied and the patches have a patch and stride structure with parameters $n, r, d$, then for suitably chosen $\eta$ and $T = O\left(\frac{n^3r}{d^4\lambda_{\min}(\Sigma)^2}\max \left(\E_{x}[||x||^4], \frac{\sqrt{\rho\E_{x}[||x||^4]}}{\epsilon ||w_*||^2} \right) \log\left(\frac{1}{\epsilon \delta}\right)\right)$, with probability $1-\delta$, the weight vector $w$ output by Convotron satisfies
\[
||w - w_*||^2 \leq \epsilon ||w_*||^2.
\]
\end{corollary}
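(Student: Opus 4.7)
The plan is to apply Theorem \ref{thm:main} and bound the patch-specific quantities $k$, $\lambda_{\max}(P)$, and $\lambda_{\min}(P_\Sigma)$ that appear in its iteration count in terms of $n, r, d, \lambda_{\min}(\Sigma)$. Substituting the prescribed $\eta$ into the bound of Theorem \ref{thm:main} gives
\[
T = O\left(\frac{k^2 \lambda_{\max}(P)}{\lambda_{\min}(P_\Sigma)^2} \max\left(\E_{x}[||x||^4],\, \frac{\sqrt{\rho\,\E_{x}[||x||^4]}}{\epsilon\,||w_*||^2}\right) \log\frac{1}{\epsilon\delta}\right),
\]
so it suffices to show $k^2 \lambda_{\max}(P) / \lambda_{\min}(P_\Sigma)^2 = O(n^3 r / (d^4 \lambda_{\min}(\Sigma)^2))$.

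For the numerator I would use $k = \lfloor (n-r)/d \rfloor + 1 = \Theta(n/d)$ together with Lemma \ref{lem:max}, which gives $\lambda_{\max}(P) = O(kp)$ with $p = \lfloor (r-1)/d \rfloor = O(r/d)$; multiplying, $k^2 \lambda_{\max}(P) = O(n^3 r / d^4)$. These are direct consequences of the patch and stride definitions and of the work already done to bound $\lambda_{\max}(P)$ in the preceding subsection.

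The key remaining step is lower bounding $\lambda_{\min}(P_\Sigma)$, which is not covered by the preceding lemmas. Here I would exploit the factorization $P_\Sigma = Q\,\Sigma\, Q^T$ with $Q := \sum_{i=1}^k P_i \in \mathbb{R}^{r \times n}$, noting in the same way that $P = Q Q^T$. For any $v \in \mathbb{R}^r$,
\[
v^T P_\Sigma v = (Q^T v)^T \Sigma (Q^T v) \geq \lambda_{\min}(\Sigma)\, ||Q^T v||^2 = \lambda_{\min}(\Sigma)\, v^T P\, v \geq \lambda_{\min}(\Sigma)\, \lambda_{\min}(P)\, ||v||^2,
\]
and Lemma \ref{lem:min} yields $\lambda_{\min}(P) \geq 1/2$, so $\lambda_{\min}(P_\Sigma) \geq \lambda_{\min}(\Sigma)/2$.

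Combining these bounds gives $k^2 \lambda_{\max}(P)/\lambda_{\min}(P_\Sigma)^2 = O(n^3 r/(d^4 \lambda_{\min}(\Sigma)^2))$, matching the claimed iteration count. The only mildly non-routine step is recognizing the factorization $P_\Sigma = Q\,\Sigma\,Q^T$, which is what lets us transfer the spectral bound on $P$ from Lemma \ref{lem:min} to $P_\Sigma$; the rest is direct substitution into Theorem \ref{thm:main}, so I do not anticipate any real obstacle beyond this short linear-algebraic observation.
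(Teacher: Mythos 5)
Your proposal is correct and follows essentially the same route as the paper: bound $\lambda_{\max}(P)=O(kp)=O(nr/d^2)$ via Lemma \ref{lem:max}, $\lambda_{\min}(P)=\Omega(1)$ via Lemma \ref{lem:min}, use $\lambda_{\min}(P_\Sigma)\geq\lambda_{\min}(P)\lambda_{\min}(\Sigma)$, and substitute into Theorem \ref{thm:main}. The only difference is that you spell out the justification of $\lambda_{\min}(P_\Sigma)\geq\lambda_{\min}(P)\lambda_{\min}(\Sigma)$ via the factorization $P_\Sigma=Q\Sigma Q^T$, $P=QQ^T$, which the paper states as an unproved observation.
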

\begin{proof}
Combining the above Lemmas gives us that $\lambda_{\max}(P) = O(pk) = O(nr/d^2)$ and $\lambda_{\min}(P) = \Omega(1)$. Observe that $\lambda_{\min}(P_\Sigma) \geq \lambda_{\min}(P)\lambda_{\min}(\Sigma)$. Substituting these values in Theorem \ref{thm:main} gives us the desired result.
\end{proof}
Comparing with SGD, \cite{brutzkus2017globally} showed that even for $r = 2$ and $d=1$, Gradient descent can get stuck in a local minima with probability $\geq 1/4$.

\subsection{2D Convolution}
Here we formally define stride and patch convolutions in two
dimensions.  Consider a 2D image of dimension $n_1 \times n_2$. Let the patch size be $r_1 \times r_2$ and stride in both directions be $d_1, d_2$ respectively. Enumerate patches such that patch $(i, j)$ starts at position $((i-1)  d_1 + 1, (j-1)  d_2 + 1)$ and is a rectangle with diagonally opposite point $((i-1)  d_2 + r_1, (j-1)  d_2 + r_2)$. Let $k_1 = \lfloor \frac{n_1-r_1}{d_1} \rfloor + 1$ and $k_2 = \lfloor \frac{n_2-r_2}{d_2} \rfloor + 1$. Let us vectorize the image row-wise into a $n_1  n_2$ dimension vector and enumerate each patch row-wise to get a $r_1  r_2$ dimensional vector.
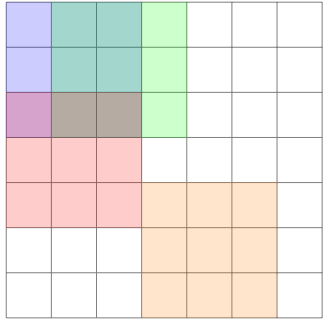
\begin{figure}
\centering
\begin{tikzpicture}[scale=0.6]
\draw[step=1cm,gray,very thin] (0,0) grid (7,7);
\fill[color=blue, opacity=0.2] (0,7) rectangle (3,4);
\fill[color=green, opacity=0.2] (1,7) rectangle (4,4);
\fill[color=red, opacity=0.2] (0,5) rectangle (3,2);
\fill[color=orange, opacity=0.2] (3,3) rectangle (6,0);
\end{tikzpicture}
\caption{2D convolution patches for image size $n_1 = n_2 = 7$, patch size $r_1 = r_2 = 3$, and stride $d_1 = 2$, $d_2 = 1$. Blue box corresponds to patch $(1,1)$, red to patch $(2, 1)$ green to patch $(1,2)$ and orange to patch $(3, 4)$.}
\end{figure}
Let $Q_{(i,j)}$ be the indicator matrix of dimension $r_1  r_2 \times n_1  n_2$ with 1 at $(a,b)$ if the $a$th location of patch $(i,j)$ is $b$. More formally, $(Q_{(i,j)})_{a,b} = 1$ for all $a = p  r_2 + q + 1$ for $0 \leq p < r_1$, $0 \leq q < r_2$, and $b = ((i-1)  d_1 + p)  n_2 + j  d_2 + q + 1$ else 0. Note that there are $k_1 \cdot k_2$ patches in total with the corresponding patch matrices being $Q_{(i,j)}$ for $1 \leq i \leq k_1, 1 \leq j \leq k_2$.

\subsubsection{Bounding Extremal Eigenvalues of $Q$}

We will bound the extremal eigenvalues of $Q = \sum_{i,p = 1}^{k_1 }\sum_{j,q = 1}^{k_2} Q_{(i,j)} Q_{(p,q)}^T$. Let $P^{(1)}_i$'s be the patch matrices corresponding to the 1D convolution for parameters $n_1,r_1,d_1$ defined as in the previous section and let $P^{(1)} = \sum_{i,j = 1}^{k_1} P^{(1)}_i (P^{(1)}_j)^T$. Define $P^{(2)}_i$'s for $1 \leq i \leq k_2$ and $P^{(2)}$ similarly with parameters $n_2, r_2, d_2$ instead of $n_1,r_1,d_1$.

\begin{lemma}
$Q_{(i,j)} = P^{(1)}_i \otimes P^{(2)}_j$.
\end{lemma}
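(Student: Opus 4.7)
The plan is to verify the identity entrywise, exploiting the standard index formula for the Kronecker product together with the row-major vectorization that was used to flatten the 2D image. First I would check that the dimensions match: $P^{(1)}_i$ is $r_1 \times n_1$ and $P^{(2)}_j$ is $r_2 \times n_2$, so $P^{(1)}_i \otimes P^{(2)}_j$ has dimensions $r_1 r_2 \times n_1 n_2$, which matches $Q_{(i,j)}$.

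Next, using the uniqueness of the decompositions $a = p\, r_2 + q + 1$ with $0 \le p < r_1,\ 0 \le q < r_2$ and $b = p' n_2 + q' + 1$ with $0 \le p' < n_1,\ 0 \le q' < n_2$, the Kronecker product formula gives
\[
(P^{(1)}_i \otimes P^{(2)}_j)_{a,b} \;=\; \bigl(P^{(1)}_i\bigr)_{p+1,\, p'+1} \cdot \bigl(P^{(2)}_j\bigr)_{q+1,\, q'+1}.
\]
From the explicit shifted-identity form of the 1D patch matrices in the previous subsection, $(P^{(1)}_i)_{p+1,\, p'+1} = 1$ iff $p' = (i-1)d_1 + p$, and $(P^{(2)}_j)_{q+1,\, q'+1} = 1$ iff $q' = (j-1)d_2 + q$. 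Hence the Kronecker product has a $1$ at $(a,b)$ exactly when
\[
a = p\, r_2 + q + 1, \qquad b = \bigl((i-1)d_1 + p\bigr) n_2 + (j-1)d_2 + q + 1,
\]
for some $0 \le p < r_1,\ 0 \le q < r_2$, which is precisely the nonzero pattern of $Q_{(i,j)}$ by its definition. Since both sides are $0/1$ matrices with identical supports, they are equal.

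The only non-trivial point is index bookkeeping: one must check that the chosen row-major vectorization of the image (with stride $n_2$ in the ``row'' direction) is the vectorization compatible with placing $P^{(1)}_i$ as the left factor and $P^{(2)}_j$ as the right factor of the Kronecker product. This is exactly the content of the identity $\mathrm{vec}(A X B^T) = (A \otimes B)\,\mathrm{vec}(X)$ applied to the $r_1 \times r_2$ patch extracted from the $n_1 \times n_2$ image, so no new calculation is required beyond noting the convention.
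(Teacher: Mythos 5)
Your proof is correct and follows essentially the same approach as the paper's: an entrywise verification that the support of $P^{(1)}_i \otimes P^{(2)}_j$ coincides with that of $Q_{(i,j)}$, using the standard Kronecker index decomposition $a = p\,r_2 + q + 1$, $b = p' n_2 + q' + 1$ together with the shifted-identity form of the 1D patch matrices. Your index bookkeeping is in fact slightly cleaner than the paper's (which contains a couple of off-by-one/typographical slips in the expression for $b$), and the closing remark about $\mathrm{vec}(AXB^T) = (A \otimes B)\,\mathrm{vec}(X)$ matches the paper's informal motivating sentence.
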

\begin{proof}
Intuitively $P^{(1)}_i$ and $P^{(2)}_j$ give the indices corresponding to the row and column of the 2D patch and the Kronecker product vectorizes it to give us the $(i,j)$th patch. More formally, we will show that $(Q_{(i,j)})_{a,b} = 1$ iff $ (P^{(1)}_i \otimes P^{(2)}_j)_{a,b} = 1$.

Let $a = p  r_2 + q + 1$ with $0 \leq p < r_1$, $0 \leq q < r_2$ and $b = r  n_2 + s + 1$ with $0 \leq r < n_1$, $0 \leq s < n_2$. Then, $(P^{(1)}_i \otimes P^{(2)}_j)_{a,b} = 1$ iff $(P^{(1)}_i)_{p,r} = 1$ and $(P^{(2)}_j)_{q,s} = 1$. We know that $(P^{(1)}_i)_{p,r} = 1$ iff $r = (i-1)  d_1 + p + 1$ and $(P^{(2)}_j)_{q,s} = 1$ iff $s = (j - 1)  d_2 + q + 1$. This gives us that $b = ((i - 1)  d_1 + p)  n_1 +(j -1)  d_2 + q + 1$, which is the same condition for $(Q_{(i,j)})_{a,b} = 1$. Thus $Q_{(i,j)} = P^{(1)}_i \otimes P^{(2)}_j$.
\end{proof}

\begin{lemma}
$Q = P^{(1)} \otimes P^{(2)}$.
\end{lemma}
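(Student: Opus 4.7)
The plan is to directly compute $Q$ using the previous lemma together with standard algebraic properties of the Kronecker product. By the previous lemma, each patch matrix factors as $Q_{(i,j)} = P^{(1)}_i \otimes P^{(2)}_j$, so I can substitute this factorization into the definition of $Q$ and then push the tensor structure through the sum, the transpose, and the matrix product.

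Concretely, I would first rewrite
\[
Q = \sum_{i,p=1}^{k_1} \sum_{j,q=1}^{k_2} Q_{(i,j)} Q_{(p,q)}^T = \sum_{i,p=1}^{k_1} \sum_{j,q=1}^{k_2} \left(P^{(1)}_i \otimes P^{(2)}_j\right)\left(P^{(1)}_p \otimes P^{(2)}_q\right)^T.
\]
Then I would invoke the identities $(A \otimes B)^T = A^T \otimes B^T$ and $(A \otimes B)(C \otimes D) = (AC) \otimes (BD)$ to collapse each summand to $\left(P^{(1)}_i (P^{(1)}_p)^T\right) \otimes \left(P^{(2)}_j (P^{(2)}_q)^T\right)$. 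Finally, bilinearity of the Kronecker product over addition lets me separate the double sum into a product of two independent sums, yielding
\[
Q = \left(\sum_{i,p=1}^{k_1} P^{(1)}_i (P^{(1)}_p)^T\right) \otimes \left(\sum_{j,q=1}^{k_2} P^{(2)}_j (P^{(2)}_q)^T\right) = P^{(1)} \otimes P^{(2)}.
\]

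There is essentially no obstacle here; the lemma is a purely algebraic consequence of the previous lemma plus three textbook Kronecker identities. The only thing to be a little careful about is cleanly justifying that the double sum indexed by $(i,p,j,q)$ factors as the product of two independent sums, which follows from applying bilinearity twice (once to pull the $k_1$-sum out of each tensor factor and once for the $k_2$-sum). This can be presented in two or three lines without additional machinery.
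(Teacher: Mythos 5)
Your proposal is correct and follows exactly the same route as the paper: substitute the factorization $Q_{(i,j)} = P^{(1)}_i \otimes P^{(2)}_j$ from the previous lemma, apply $(A \otimes B)^T = A^T \otimes B^T$ and the mixed-product property, and then factor the double sum by bilinearity. No differences worth noting.
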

\begin{proof}
We have,
\begin{align*}
Q &= \sum_{i,p = 1}^{k_1}\sum_{j,q = 1}^{k_2} Q_{(i,j)} Q_{(p,q)}^T\\
& = \sum_{i,p = 1}^{k_1}\sum_{j,q = 1}^{k_2} (P^{(1)}_i \otimes P^{(2)}_j) (P^{(1)}_p \otimes P^{(2)}_q)^T\\
& = \sum_{i,p = 1}^{k_1}\sum_{j,q = 1}^{k_2} (P^{(1)}_i \otimes P^{(2)}_j) ((P^{(1)}_p)^T \otimes (P^{(2)}_q)^T)\\
& = \sum_{i,p = 1}^{k_1}\sum_{j,q = 1}^{k_2}  (P^{(1)}_i (P^{(1)}_p)^T) \otimes (P^{(2)}_j (P^{(2)}_q)^T)\\
& = \left(\sum_{i,p = 1}^{k_1} P^{(1)}_i (P^{(1)}_p)^T \right) \otimes \left(\sum_{j,q = 1}^{k_2} P^{(2)}_j (P^{(2)}_q)^T\right)\\
& = P^{(1)} \otimes P^{(2)}.
\end{align*}
\end{proof}

\begin{lemma}\label{lem:2D}
We have $\lambda_{\min}(Q) \geq 0.25$ and $\lambda_{\max}(Q) = O(k_1p_1k_2p_2)$ where $p_1 = \lfloor\frac{r_1 - 1}{d_1}\rfloor$ and $p_2 = \lfloor\frac{r_2 - 1}{d_2}\rfloor$.
\end{lemma}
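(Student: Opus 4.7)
The proof plan is essentially to reduce the two-dimensional claim to the one-dimensional bounds already established in Lemmas \ref{lem:max} and \ref{lem:min}, using the Kronecker product structure from the previous lemma.

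First, I would record the standard spectral property of the Kronecker product: if $A$ and $B$ are symmetric with eigenvalues $\{\lambda_i(A)\}$ and $\{\mu_j(B)\}$, then the eigenvalues of $A \otimes B$ are exactly the products $\{\lambda_i(A)\,\mu_j(B)\}_{i,j}$. In particular, for positive semidefinite $A$ and $B$,
\[
\lambda_{\min}(A \otimes B) = \lambda_{\min}(A)\,\lambda_{\min}(B), \qquad \lambda_{\max}(A \otimes B) = \lambda_{\max}(A)\,\lambda_{\max}(B).
\]
Next, I would check PSD-ness: note $P^{(1)} = \sum_{i,j} P_i^{(1)} (P_j^{(1)})^T = M_1 M_1^T$ where $M_1 = \sum_i P_i^{(1)}$, so $P^{(1)} \succeq 0$, and analogously $P^{(2)} \succeq 0$.

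Then I would combine the previous lemma $Q = P^{(1)} \otimes P^{(2)}$ with the Kronecker eigenvalue identity. For the minimum eigenvalue, Lemma \ref{lem:min} applied separately in each dimension yields $\lambda_{\min}(P^{(1)}) \geq 0.5$ and $\lambda_{\min}(P^{(2)}) \geq 0.5$, so
\[
\lambda_{\min}(Q) = \lambda_{\min}(P^{(1)})\,\lambda_{\min}(P^{(2)}) \geq 0.25.
\]
For the maximum eigenvalue, Lemma \ref{lem:max} gives $\lambda_{\max}(P^{(1)}) = O(k_1 p_1)$ and $\lambda_{\max}(P^{(2)}) = O(k_2 p_2)$, and multiplying yields $\lambda_{\max}(Q) = O(k_1 p_1 k_2 p_2)$.

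There is no real obstacle here — the substantive work was already done in the 1D analysis and in the previous lemma identifying $Q$ as $P^{(1)} \otimes P^{(2)}$. The only point worth flagging is that the 1D bounds implicitly require $n_i \geq 2r_i - 1$ and $r_i \geq d_i$ in each coordinate, which I would state as the standing hypothesis before invoking Lemmas \ref{lem:max} and \ref{lem:min}.
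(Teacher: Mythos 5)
Your proposal is correct and follows essentially the same route as the paper: identify $Q = P^{(1)} \otimes P^{(2)}$, use positive semi-definiteness plus the Kronecker eigenvalue identity to factor the extremal eigenvalues, and plug in the 1D bounds from Lemmas \ref{lem:max} and \ref{lem:min}. Your additional observations (writing $P^{(1)} = M_1 M_1^T$ to justify PSD-ness, and flagging the standing hypotheses $n_i \geq 2r_i - 1$, $r_i \geq d_i$) are sensible refinements of the same argument.
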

\begin{proof}
Since $Q = P^{(1)} \otimes P^{(2)}$ and $Q, P^{(1)}, P^{(2)}$ are positive semi-definite, $\lambda_{\min}(Q) = \lambda_{\min}(P)\lambda_{\min}(P^{(2)})$ and $\lambda_{\max}(Q) = \lambda_{\max}(P^{(1)})\lambda_{\max}(P^{(2)})$. Using the lemmas from the previous section gives us the required result.
\end{proof}

Note that this technique can be extended to higher dimensional patch structures as well.

\subsubsection{Learning Result for 2D}
Similar to the 1D case, combining the above analysis with Theorem \ref{thm:main} gives us learnability of 2D convolution filters.
\begin{corollary}\label{cor:2D}
\sloppy
If Assumptions 1(a),(b), and (d) are satisfied and the patches have a 2D patch and stride structure with parameters $n_1, n_2, r_1, r_2, d_1, d_2$, then for suitably chosen $\eta$ and $T = O\left(\frac{n_1^3n_2^3r_1r_2}{d_1^3d_2^3\lambda_{\min}(\Sigma)^2} \max \left(\E_{x}[||x||^4], \frac{\sqrt{\rho\E_{x}[||x||^4]}}{\epsilon ||w_*||^2} \right) \log\left(\frac{1}{\epsilon \delta}\right)\right)$, with probability $1-\delta$, the weight vector $w$ output by Convotron satisfies
\[
||w - w_*||^2 \leq \epsilon ||w_*||^2.
\]
\end{corollary}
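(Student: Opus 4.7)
The plan is to mirror the proof of Corollary \ref{cor:1D}, just with the eigenvalue bounds from the 2D setting in place of the 1D ones. First, I would invoke Lemma \ref{lem:2D} to obtain $\lambda_{\min}(Q) \geq 1/4$ and $\lambda_{\max}(Q) = O(k_1 p_1 k_2 p_2)$. Using $k_1 = O(n_1/d_1)$ and $p_1 = O(r_1/d_1)$ together with the analogous expressions for the second coordinate, this collapses to $\lambda_{\max}(Q) = O\!\left(\tfrac{n_1 n_2 r_1 r_2}{d_1^2 d_2^2}\right)$, while the total number of patches is $k = k_1 k_2 = O\!\left(\tfrac{n_1 n_2}{d_1 d_2}\right)$.

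Next, I would lift the bound on $\lambda_{\min}(Q)$ to one on $\lambda_{\min}(Q_\Sigma)$. Writing $M = \sum_{(i,j)} Q_{(i,j)}$, the identities $Q = MM^T$ and $Q_\Sigma = M\Sigma M^T$ give, for every $v$,
\[
v^T Q_\Sigma v = (M^T v)^T \Sigma (M^T v) \geq \lambda_{\min}(\Sigma)\, v^T Q v \geq \lambda_{\min}(\Sigma)\lambda_{\min}(Q) \|v\|^2,
\]
so $\lambda_{\min}(Q_\Sigma) \geq \tfrac{1}{4}\lambda_{\min}(\Sigma)$. This is the only non-mechanical step in the argument.

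Finally, I would substitute $k$, $\lambda_{\max}(Q)$, and $\lambda_{\min}(Q_\Sigma)$ into the iteration bound of Theorem \ref{thm:main}. After choosing $\eta$ as prescribed there, the iteration count simplifies to
\[
T = O\!\left(\frac{k^2 \lambda_{\max}(Q)}{\lambda_{\min}(Q_\Sigma)^2}\, \max\!\left(\E_x[\|x\|^4], \tfrac{\sqrt{\rho\,\E_x[\|x\|^4]}}{\epsilon\|w_*\|^2}\right) \log\!\left(\tfrac{1}{\epsilon\delta}\right)\right),
\]
and collecting powers of $n_i$ and $d_i$ from the substitutions above yields the stated rate.

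I do not anticipate any genuine obstacle: Lemma \ref{lem:2D} carries all of the combinatorial content via the Kronecker-product structure, and Theorem \ref{thm:main} already handles the optimization analysis. The only point worth writing carefully is the factorization $Q_\Sigma = M\Sigma M^T$, which is what lets the spectral bound on $\Sigma$ compose with the spectral bound on $Q$.
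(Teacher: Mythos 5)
Your proposal is correct and follows essentially the same route as the paper: apply Lemma \ref{lem:2D}, use $\lambda_{\min}(Q_\Sigma) \geq \lambda_{\min}(Q)\lambda_{\min}(\Sigma)$, and substitute into Theorem \ref{thm:main}; your $M\Sigma M^T$ factorization is a valid justification of the inequality that the paper merely asserts. Note that your bookkeeping actually yields $d_1^4 d_2^4$ in the denominator (matching the $d^4$ of the 1D corollary), which is a tighter bound that in particular implies the stated $d_1^3 d_2^3$ rate.
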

\begin{proof}
Lemma \ref{lem:2D} gives us that $\lambda_{\max}(Q) = O(n_1n_2r_1r_2/(d_1d_2)^2)$ and $\lambda_{\min}(P) = \Omega(1)$. Observe that $\lambda_{\min}(P_\Sigma) \geq \lambda_{\min}(P)\lambda_{\min}(\Sigma)$. Substituting these values in Theorem \ref{thm:main} gives us the desired result.
\end{proof}
\section{Non-overlapping Patches are Easy}
In this section, we will show that if there is one patch that does not overlap with any patch and the covariance matrix is identity then we can easily learn the filter even if the other patches have arbitrary overlaps. This includes the commonly used Gaussian assumption. WLOG we assume that $P_1$ is the patch that does not overlap with any other patch implying $P_1P_j^T = P_j^TP_1 = 0$ for all $j \neq 1$.

\begin{algorithm}[tb]
   \caption{Convotron-No-Overlap}
   \label{alg:convotronno}
\begin{algorithmic}
   \STATE Initialize $w_1 := 0 \in \mathbb{R}^r$.
   \FOR{$t=1$ {\bfseries to} $T$}
   \STATE Draw $(x_t, y_t) \sim \mathcal{D}$
   \STATE Let $G_t = (y_t - f_{w_t}(x_t))  P_1  x_t$
   \STATE Set $w_{t+1} = w_t + \eta G_t$
   \ENDFOR
   \STATE {Return $w_{T+1}$}
\end{algorithmic}
\end{algorithm}

Observe that the algorithm ignores the directions of all other patches
and yet succeeds. This indicates that with respect to a Gaussian
distribution, in order to have an interesting patch structure (for one
layer networks), it is necessary to avoid having even a single disjoint
patch. The following theorem shows the convergence of Convotron-No-Overlap.
\begin{theorem}\label{thm:main-no-overlap}
If Assumptions \ref{assume} are satisfied with $\Sigma = I$, then for $\eta = \frac{(1 + \alpha)}{3k}\min \left(\frac{1}{\E_{x}[||x||^4]}, \frac{\epsilon ||w_*||^2}{\sqrt{\rho\E_{x}[||x||^4]}} \right)$ and $T \geq \frac{1}{\eta \delta} \log\left(\frac{1}{\epsilon \delta}\right)$, with probability $1-\delta$, the weight vector $w$ outputted by Convotron-No-Overlap satisfies
\[
||w - w_*||^2 \leq \epsilon ||w_*||^2.
\]
\end{theorem}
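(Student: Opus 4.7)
My plan is to mirror the proof of Theorem \ref{thm:main} almost verbatim, taking advantage of two structural simplifications: the patch $P_1$ is disjoint from every other $P_j$, and $\Sigma = I$. As before, the starting point is the telescoping identity
\[
\E_{x_t,y_t}[\|w_t - w_*\|^2 - \|w_{t+1} - w_*\|^2 \mid S_{t-1}] = 2\eta \,\E[(w_* - w_t)^T G_t \mid S_{t-1}] - \eta^2\, \E[\|G_t\|^2 \mid S_{t-1}],
\]
and the goal is to produce a recursion of the form $\E[\|w_{t+1} - w_*\|^2 \mid S_{t-1}] \leq (1 - 3\eta\beta + \eta^2\gamma)\|w_t - w_*\|^2 + \eta^2 B$, from which the two-case analysis and Markov step of Theorem \ref{thm:main} carry over unchanged.

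For the drift term I would expand $y_t = f_{w_*}(x_t) + \xi_t$, use $\E[\xi_t \mid x_t] = 0$, and split $f_{w_*} - f_{w_t} = \frac{1}{k}\sum_i (\sigma(w_*^T P_i x_t) - \sigma(w_t^T P_i x_t))$. Applying Lemma \ref{lem:switch} (which needs only the symmetry of $\D_\mathcal{X}$) term-by-term yields
\[
\E[(w_* - w_t)^T G_t \mid S_{t-1}] = \frac{1+\alpha}{2k}\sum_{i=1}^{k}(w_* - w_t)^T P_i \Sigma P_1^T (w_* - w_t).
\]
With $\Sigma = I$ and $P_i P_1^T = 0$ for every $i \neq 1$, all cross terms vanish; the $i=1$ contribution uses $P_1 P_1^T = I_r$ (each row of $P_1$ selects a distinct coordinate), producing the clean lower bound $\frac{1+\alpha}{2k}\|w_* - w_t\|^2$. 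This is the only step where the no-overlap hypothesis is used, and it is where the analysis diverges from Theorem \ref{thm:main}: rather than relying on $\lambda_{\min}(P_\Sigma)$, disjointness combined with identity covariance isolates a single clean quadratic form.

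The variance bound then proceeds identically to the original proof, except that the spectral argument $\|\sum_i P_i x\|^2 \leq \lambda_{\max}(P)\|x\|^2$ is replaced by the elementary $\|P_1 x\|^2 \leq \|x\|^2$ (since $P_1^T P_1$ is a coordinate projector). Combined with Lemma \ref{lem:sq} and Cauchy-Schwarz on $\E[\xi_t^2\|x_t\|^2]$, this gives
\[
\E[\|G_t\|^2 \mid S_{t-1}] \leq \E_x[\|x\|^4]\,\|w_* - w_t\|^2 + \sqrt{\rho\,\E_x[\|x\|^4]}.
\]

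Putting the two bounds together, the recursion holds with $\beta = \frac{1+\alpha}{3k}$, $\gamma = \E_x[\|x\|^4]$, and $B = \sqrt{\rho\,\E_x[\|x\|^4]}$, which are exactly the parameters implicit in the stated choice of $\eta$. From here I would copy the final portion of the proof of Theorem \ref{thm:main}: the two-case argument shows that $\E_{S_t}[\|w_{t+1} - w_*\|^2]$ either contracts by a factor $(1 - \eta\beta)$ per step or has already fallen below $\epsilon\|w_*\|^2$, so after $T = O((\eta\beta)^{-1}\log(1/(\epsilon\delta)))$ steps Markov's inequality delivers the claim with probability $1-\delta$. The only non-routine step is the cross-term cancellation in the drift computation; once that algebraic identity is in hand, the result is essentially Theorem \ref{thm:main} with both $\lambda_{\min}(P_\Sigma)$ and $\lambda_{\max}(P)$ replaced by $1$.
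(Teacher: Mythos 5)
Your proposal is correct and matches the paper's own proof essentially step for step: the drift term is reduced to $\frac{1+\alpha}{2k}\|w_*-w_t\|^2$ via Lemma \ref{lem:switch} followed by the cancellation $P_iP_1^T=0$ for $i\neq 1$ together with $\Sigma=I$ and $P_1P_1^T=I_r$, the variance bound replaces $\lambda_{\max}(P)$ by $1$ using $\|P_1x\|\leq\|x\|$, and the two-case recursion with Markov's inequality is reused verbatim from Theorem \ref{thm:main}. No substantive differences.
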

\begin{proof}
The proof follows the outline of the Convotron proof very closely. We use the same definitions as in the previous proof. We have,
\begin{align*}
\E_{x_t, y_t}[(w_* - w_t)^T  G_t | S_{t-1}] &=\frac{1}{k}\sum_{1 \leq i \leq k} \E_{x_t}[(\sigma(w_*^T  P_i  x_t) - \sigma(w_t^T  P_i  x_t))(w_*^T- w_t^T)  P_1  x_t| S_{t-1}]\\
&= \frac{1 + \alpha}{2k}\sum_{1 \leq i \leq k}\E_{x_t}[((w_*^T - w_t^T)  P_i  x_t)((w_*^T- w_t^T)  P_1  x_t)| S_{t-1}] \\
&= \frac{1 + \alpha}{2k}(w_*^T - w_t^T)  \left(\sum_{1 \leq i \leq k} P_i\right)  \E_{x_t}[x_t x_t^T]  P_1(w_* - w_t) \\
&= \frac{1 + \alpha}{2k}||w_*^T - w_t^T||^2
\end{align*}
The last equality follows since $P_i^TP_1 = 0$ for all $ i \neq 1$ and $P_1^TP_1$ is a permutation of identity.

Similarly,
\begin{align*}
\E_{x_t, y_t}[||G_t||^2| S_{t-1}] &= \E_{x_t, y_t}\left[(y_t - f_{w_t}(x_t))^2\left|\left| P_i  x_t\right|\right|^2 \middle| S_{t-1}\right]\nonumber\\
&\leq \E_{x_t, y_t}\left[(y_t - f_{w_t}(x_t))^2||x_t||^2 \middle| S_{t-1}\right]\\
&\leq \E_{x_t}[||x_t||^4] ||w_* - w_t||^2 + \sqrt{\rho\E_{x_t}[||x_t||^4]}
\end{align*}

Following the rest of the analysis for $\eta$ and $T$ as in the theorem statement gives us the required result.
\end{proof}
\section{Experiments: SGD vs Convotron}
\begin{algorithm}[tb]
   \caption{SGD}
   \label{alg:sgd}
\begin{algorithmic}
   \STATE Randomly initialize $w_1 \in \mathbb{R}^r$.
   \FOR{$t=1$ {\bfseries to} $T$}
   \STATE Draw $(x_t, y_t) \sim \mathcal{D}$
   \STATE Let $G_t = (y_t - f_{w_t}(x_t))  \left(\sum_{i=1}^k \sigma^\prime(w_t^TP_i x_t) P_i  x_t\right)$
   \STATE Set $w_{t+1} = w_t + \eta G_t$
   \ENDFOR
   \STATE {Return $w_{T+1}$}
\end{algorithmic}
\end{algorithm}
To further support our theoretical findings, we empirically compare
the performance of SGD (Algorithm \ref{alg:sgd}) with our algorithm
Convotron. We measure performance based on the failure probability, that is, the fraction of runs the
algorithm fails to converge on randomly initialized runs (the randomness is
over both the choice of initialization for SGD and the draws from the distribution). More formally, we say that the algorithm fails if the closeness in $l_2$-norm of the difference of the final weight vector obtained $(w_T)$ and the true weight parameter ($w_*$), that is, $||w_T - w_*||$ is greater than a threshold $\theta$. 
We choose this
measure because in practice, due to the high computation
time of training neural networks, random restarts are
expensive.


In the experiments, given a fixed true weight vector, for varying
learning rates (increments of $0.01$), we choose 50 random
initializations and run the two algorithms with them as starting
points. We plot the failure probability ($\theta = 0.1$) with varying
learning rate. Note that the lowest learning rate we use is $0.01$ as
making the learning rate too small requires high number of iterations
for convergence for both algorithms.   

\begin{figure}
\label{exp}
  \centering
  \begin{minipage}[b]{0.47\columnwidth}
    \includegraphics[width=\textwidth]{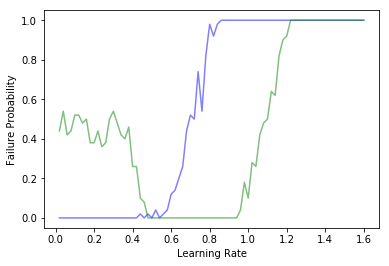}
  \end{minipage}
  \hfill
  \begin{minipage}[b]{0.47\columnwidth}
    \includegraphics[width=\textwidth]{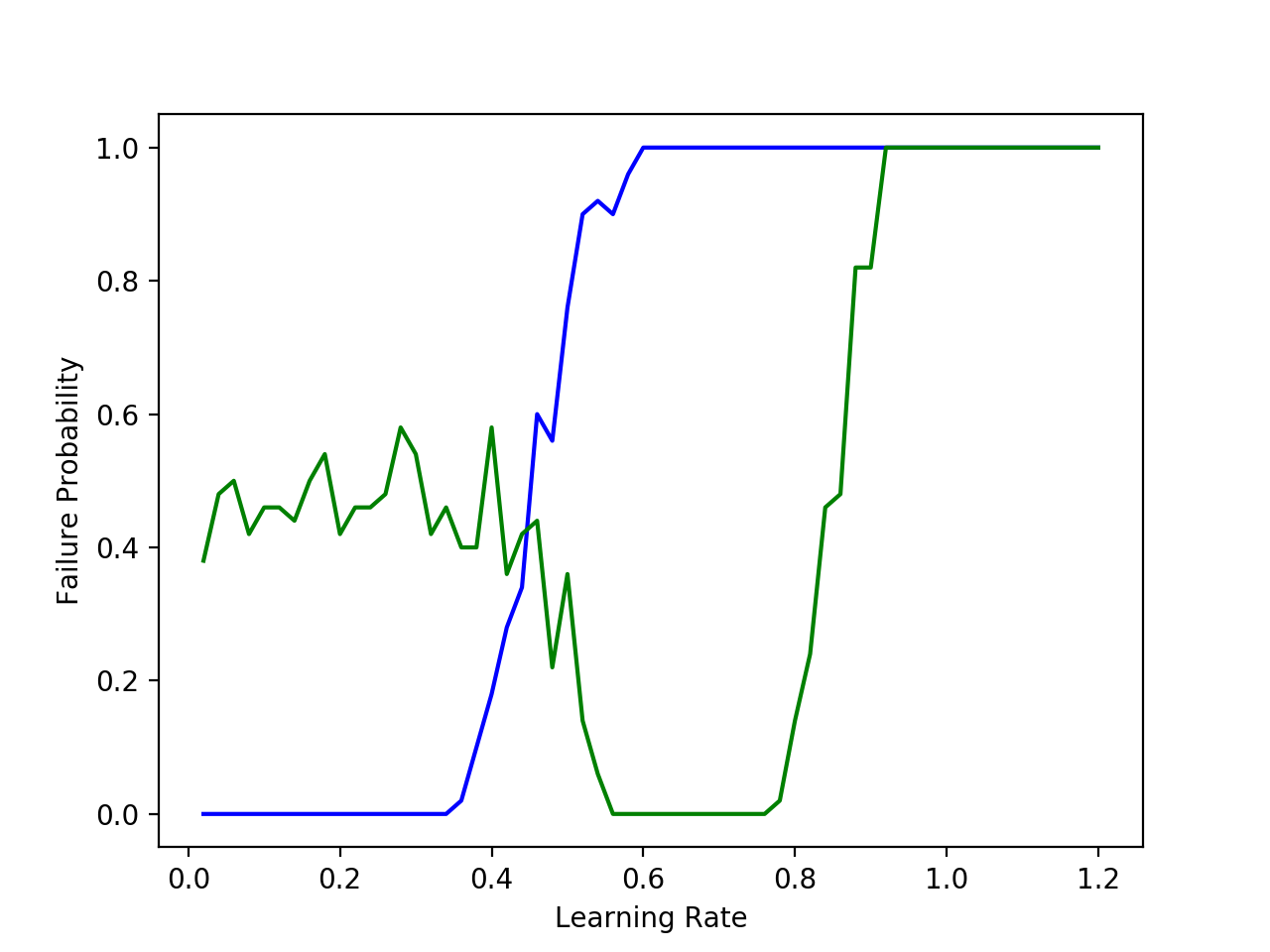}
  \end{minipage}
  \begin{minipage}[b]{0.47\columnwidth}
    \includegraphics[width=\textwidth]{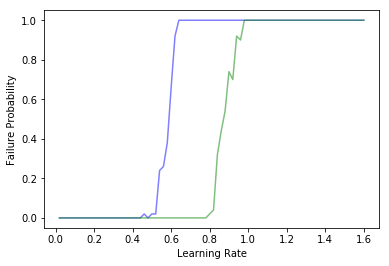}
  \end{minipage}
  \hfill
  \begin{minipage}[b]{0.47\columnwidth}
    \includegraphics[width=\textwidth]{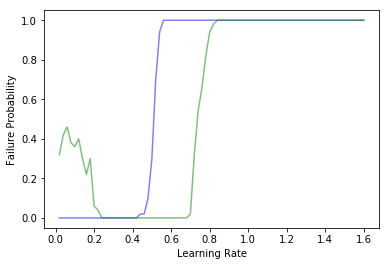}
  \end{minipage}
  \caption{Failure probability of SGD (green) vs Convotron (blue) with varying learning rate $\eta$. Experiment 1: Patch and stride 1D (Top-left) and  2D (Top-right). Experiment 2: Input distribution has mean 0 and covariance matrix identity (Bottom-left) and non-identity covariance matrix (Bottom-right). The curves are shifted due to scaling difference of updates.}
\end{figure}

We first test the performance on a simple 1D convolution case with
$(n,k,d, T) = (8, 4, 1, 6000)$ and 2D case with $(n_1, n_2, k_1, k_2,
d_1, d_2, T) = (5,5,3,3,1,1,15000)$ on inputs drawn from a normalized
($l_2$ norm 1) Gaussian distribution with identity covariance
matrix. We adversarially choose a fixed weight vector\footnote{We take
  the vector to be $[1, -1, 1, -1]$ in the 1D case and normalize. This
  weight vector can be viewed as an edge detection filter, that is,
  counting the number of times image goes from black (negative) to
  white (positive).} ($l_2$-norm 1). Figure \ref{exp} (Top) shows that SGD
has a small data dependent range where it succeeds but may fail with
almost $0.5$ probability outside this region whereas Convotron always
returns a good solution for small enough $\eta$ chosen according to
Theorem $\ref{thm:main}$. The failure points observed for SGD show the
prevalence of bad local minima where SGD gets stuck.

For the second experiment, we choose a fixed weight vector for which
SGD performs well with very high probability on a normalized Gaussian input
distribution with identity covariance matrix (see Figure \ref{exp} (Bottom-left)). However, on choosing a different covariance matrix with
higher condition number $\sim 60$, the performance of SGD worsens
whereas Convotron always succeeds (see Figure \ref{exp} (Bottom-Right)). The covariance matrix is generated by choosing random matrices followed by symmetrizing them and adding $c I$ for $c>0$ to make the eigenvalues positive.

These experiments demonstrate that techniques for fine-tuning SGD's
learning rate are necessary, even for very simple architectures.  In
contrast, no fine-tuning is necessary for Convotron: the correct
learning rate can be easily computed given the learner's desired patch
structure and estimate of the covariance martix.


\section*{Acknowledgments}
We thank Jessica Hoffmann and Philipp Kr\"{a}henb\"{u}hl for useful discussions.


\bibliography{full}
\bibliographystyle{alpha}
\appendix
\section{Omitted Proofs}
\subsection{Proof of Lemma 1}
We will follow the following notation:
\begin{align*}
T_1 &= \E_{x \sim \D}[\sigma(a^T  x)(b^T  x)]\\
T_2 &= \E_{x \sim \D}[(a^T  x) (b^T  x)].
\end{align*}
Since $x$ is drawn from a symmetric distribution we have $E_{x \sim \D}[F(x)] = E_{x \sim \D}[F(-x)]$ for any function $F$. Thus, we have
\begin{align*}
T_1 &= \E_{x \sim \D}[\sigma(-a^T  x)(-b^T  x)]\\
\implies 2 T_1 &= \E_{x \sim \D}[(\sigma(-a^T  x) - \sigma(-a^T  x))(b^T  x)]
\end{align*}
Observe that $\sigma(c) - \sigma(-c) = \frac{(1 - \alpha)|c| + (1 + \alpha)c}{2} - \frac{(1 - \alpha)|a| - (1 + \alpha)c}{2} = (1 + \alpha)c$. Substituting this in the above, we get the required result $2T_1  = (1 + \alpha)T_2$.

\subsection{Proof of Lemma 2}
We have,
\begin{align*}
\frac{1}{k}\sum_{1 \leq i \leq k} &\E_{x}[(\sigma(w_*  P_i  x) - \sigma(w  P_i  x))(w_* - w)^T  P_i  x)]\\
&= \frac{1 + \alpha}{2k}\sum_{1 \leq i \leq k} \E_{x}[((w_* - w)^T  P_i  x))^2]\\
&= \frac{1 + \alpha}{2k}(w_* - w_t)^T\left(\sum_{1 \leq i \leq k} P_i \E_x[xx^T] P_i^T \right)(w_*- w_t)\\
&= \frac{1 + \alpha}{2k}(w_* - w_t)^T\left(\sum_{1 \leq i \leq k} P_i \Sigma P_i^T \right)(w_* - w_t)\\
&\leq \frac{(1 + \alpha)\lambda_{\max}(\Sigma)}{2k}\left(\sum_{1 \leq i \leq k} \lambda_{\max}(P_i P_i^T) \right)||w_* - w_t||^2\\
& = \frac{(1 + \alpha)\lambda_{\max}(\Sigma)}{2} ||w_* - w||^2
\end{align*}
The first equality follows from using Lemma \ref{lem:switch} and the last follows since for all $i$, $P_iP_i^T$ is a permutation of the identity matrix by definition.

Using monotonicity of $\sigma$ and Jensen's inequality, we also have,
\begin{align*}
\frac{1}{k}\sum_{1 \leq i \leq k} &\E_{x}[(\sigma(w_*  P_i  x) - \sigma(w  P_i  x))(w_* - w)^T  P_i  x]\\
& \geq \frac{1}{k}\sum_{1 \leq i \leq k} \E_{x}[(\sigma(w_*  P_i  x) - \sigma(w  P_i  x))^2]\\
& \geq  \E_{x}\left[\left(\frac{1}{k}\sum_{1 \leq i \leq k}(\sigma(w_*  P_i  x) - \sigma(w  P_i  x))\right)^2\right]\\
& = L(w).
\end{align*}
Combining the two above lemmas, we get the required result.

\subsection{Proof of Lemma 3}
We have,
\begin{align*}
(f_{w_*}(x) - f_{w_t}(x))^2 &= \left(\frac{1}{k}\sum_{i=1}^k (\sigma(w_*^T  P_i  x) - \sigma(w^T  P_i  x)) \right)^2\\
&\leq \frac{1}{k}\sum_{i=1}^k (\sigma(w_*^T  P_i  x) - \sigma(w^T  P_i  x))^2\\
&\leq \frac{1}{k}\sum_{i=1}^k (w_*^T  P_i  x - w^T  P_i  x)^2\\
&\leq \frac{1}{k}\sum_{i=1}^k ||w_* - w||^2\lambda_{\max}(P_iP_i^T)||||x||^2\\
&\leq ||w_* - w||^2||x||^2 \label{eq:finbound}
\end{align*}
The first inequality follows from using Jensen's, the second inequality follows from the 1-Lipschitz property of $\sigma$, the third follows from observing that $P_iP_i^T$ is a PSD matrix and the last inequality follows since for all $i$, $\lambda_{\max}(P_iP_i^T) = 1$ since $P_i P_i^T$ is a permutation of the identity matrix.
\section{Properties of Patch Matrix $P$}
Let $r = pd + q$ for some $p \geq 0$ and $1 \leq q \leq d$.
\begin{lemma}\label{lem:min1}
For $d < r/3$, $P^{-1}$ has the following form:
\[
P^{-1}_{i,j} = 	\begin{cases}
				\alpha_0 \text{ if } i = j \in \{1, \ldots, q\} \cup \{r-q + 1, \ldots ,r\}\\
				\alpha_1 \text{ if } i = j \in \{q+1, \ldots, d\} \cup \{r - d + 1, r - q \}   \\
				1 \text{ if } i = j \in \{d + 1, \ldots, r-d\} \\
				-0.5 \text{ if } |i - j| = d + 1\\
				\phi \text{ if } |i - j| = (p - 1)d + 1 \text{ and } i \text{ or } j \in \{q+1, \ldots, d\}\\
				\beta  \text{ if } |i - j| = pd + 1 \\
				0 \text{ otherwise}
			\end{cases}
\]
where $\alpha_0 = \beta + 0.5$, $\alpha_1 = \phi + 0.5$, $\beta = \frac{0.5}{2k - p}$ and $\phi = \frac{0.5}{2k - p - 1}$. Also, $\lambda_{\max}(P^{-1}) \leq 2$.
\end{lemma}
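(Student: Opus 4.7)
The plan is to verify the explicit formula for $P^{-1}$ by computing $P \cdot M$, where $M$ denotes the matrix on the right-hand side of the claimed formula, and then apply the Gershgorin circle theorem (Theorem \ref{thm:circle}) to bound $\lambda_{\max}(P^{-1})$.

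First, I would exploit the band structure of both matrices. Row $i$ of $P$ has non-zero entries only at columns $i + ad$ for $-p \leq a \leq p$ with value $k - |a|$ (restricted to $1 \leq i + ad \leq r$). Column $j$ of $M$ has non-zero entries only on the diagonals at distances $0$, $d+1$, $(p-1)d+1$, and $pd+1$. Because $P$ strides by $d$ while $M$ jumps by $d+1$, $(p-1)d+1$, or $pd+1$, most combinations in the product $(PM)_{i,j} = \sum_{\ell} P_{i,\ell} M_{\ell,j}$ vanish, leaving only a handful of terms to analyze for each fixed $(i,j)$. The hypothesis $d < r/3$ forces $p \geq 3$, which keeps the three off-diagonal bands of $M$ distinct and sufficiently separated; this is what makes the case analysis tractable.

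Second, I would split the verification of $PM = I$ into cases according to $|i-j|$ and the position of $i$ in $\{1, \ldots, r\}$. For the diagonal case $i = j$, the contribution of the diagonal of $M$ combines with the contributions from the $(d+1)$- and $(pd+1)$-bands (and, for rows with $i \in \{q+1, \ldots, d\}$, also the $((p-1)d+1)$-band) to give exactly $1$. The specific values $\alpha_0 = \beta + 0.5$, $\alpha_1 = \phi + 0.5$, $\beta = 0.5/(2k-p)$, $\phi = 0.5/(2k - p - 1)$ are precisely tuned so that the sums telescope to $1$ on the diagonal and to $0$ on every other diagonal, including the boundary rows where some bands of $P$ are truncated. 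For off-diagonal entries $i \neq j$, the relevant cross-terms between bands of $P$ and bands of $M$ reduce to short algebraic identities that cancel to $0$.

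Third, once the formula for $P^{-1}$ is established, I would apply Gershgorin. For a truly middle row $i \in \{d+1, \ldots, r-d\}$, the neighbors at distances $(p-1)d+1$ and $pd+1$ fall outside $\{1, \ldots, r\}$, so the only non-zero entries in that row are the diagonal $1$ and (at most) two entries of value $-0.5$ on the $(d+1)$-band, giving a row sum of exactly $2$. For boundary rows, the diagonal entry drops to $\alpha_0$ or $\alpha_1$, which is $0.5 + O(1/k)$, and although a few of the small entries $\beta, \phi$ appear, the total row sum stays below $2$ by a margin of $0.5 - O(1/k)$. Hence every Gershgorin disk is contained in $(-\infty, 2]$, and $\lambda_{\max}(P^{-1}) \leq 2$. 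The main obstacle is the bookkeeping in the second step: tracking which off-diagonal bands of $M$ are active in each row, how boundary truncation affects the bands of $P$, and verifying that $\alpha_0$, $\alpha_1$, $\beta$, $\phi$ are designed to compensate exactly for these boundary effects. The Gershgorin step itself is short once the formula is in hand.
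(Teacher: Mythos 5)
Your proposal matches the paper's proof in both structure and substance: verify $P\,P^{-1}=I$ by a case analysis on the bands of the two matrices and on the position of the row index (boundary rows in $\{1,\dots,q\}$ and $\{q+1,\dots,d\}$ versus interior rows), with the constants $\alpha_0,\alpha_1,\beta,\phi$ chosen exactly to cancel the boundary truncation, and then apply Theorem \ref{thm:circle} to get row sums $\max(2\beta+1,\,2\phi+1,\,2)=2$. The only difference is cosmetic: the paper additionally invokes bisymmetry of $P$ and $P^{-1}$ to reduce the verification to the lower-triangular part, but this does not change the argument.
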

\begin{proof}
We need to show that $ A = P P^{-1} = I$. Observe that $P$ and $P^{-1}$ are bisymmetric, thus $A$ is centrosymmetric implying $A_{i,j} = A_{r - 1 - i, r - 1 - j}$. Hence, we need to only prove that the lower triangular matrix matches $I$. We show the result for $p >2$, as the same ideas apply for the other case.

To verify this, consider each diagonal entry, 
\begin{itemize}
	\item $d \leq i \leq \lceil d/2 \rceil$: $A_{i,i} =  -0.5 (k - 1) + k - 0.5(k - 1) = 1$. 
	\item $i \in \{1, \ldots, q\}$: $A_{i,i} = \alpha_0 k - 0.5 (k - 1) + \beta \left(k - p\right) = 1$.
	\item $i \in \{q+1, \ldots, d\}$: $A_{i,i} = \alpha_1 k - 0.5 (k - 1) + \phi \left(k - p - 1\right) = 1$.
\end{itemize}

For non-diagonal entries, that is, $j \neq i$, 
\begin{itemize}
	\item $d \leq j \leq \lceil d/2 \rceil$: $A_{i,j} = -0.5 P_{i,j-d} + P_{i,j} -0.5 P_{i, j+d}$. If $|i - j| = a d$ then $A_{i,j} = -0.5\left(k  - a - 1\right) + k - a - 0.5\left(k  - a + 1\right) = 0$, else $P_{i,j} = P_{i,j-d} = P_{i,j+d} = 0 \implies A_{i,j} = 0$.
	\item $j \in \{1, \ldots, q\}$: $A_{i,j} = \alpha_0 P_{i,j} - 0.5 P_{i,j + d} + \beta P_{i, j + pd}$. Now if $i - j = a d$, then $A_{i,j} = \alpha_0(k - a) - 0.5 (k - a + 1) + \beta (k - p + a) = 0$ else $P_{i,j} = P_{i,j + d} =P_{i, j + pd} = 0 \implies A_{i,j} = 0$.
	\item $j \in \{q+1, \ldots, d\}$: $A_{i,j} = \alpha_1 P_{i,j} - 0.5 P_{i,j + d} + \beta P_{i, j + pd}$. Now if $i - j = a d$, then $A_{i,j} = \alpha_1(k - a) - 0.5 (k - a + 1) + \phi (k - p + a + 1) = 0$ else $P_{i,j} = P_{i,j + d} = P_{i, j + pd} = 0 \implies A_{i,j} = 0$.
\end{itemize}
Hence $A = I$.

Using Theorem \ref{thm:circle}, we have $\lambda_{\max}(P^{-1}) = \max_i \left(P^{-1}_{i,i} + \sum_{j \neq i} |P^{-1}_{i,j}|\right)$. If $q < d$, then $\lambda_{\max}(P^{-1}) =  \max( \alpha_0 + 0.5 + \beta, \alpha_1 + 0.5 + \phi, 1 + 0.5 + 0.5) = \max(2 \beta + 1, 2 \phi + 1, 2) = 2$ as $\beta, \phi \leq 0.5$ which follows from $2k - p - 1 \geq 1$. Similarly, when $q = d$, $\lambda_{\max}(P^{-1}) =  \max( \alpha_0 + 0.5 + \beta, 1 + 0.5 +0.5) = \max(2\beta + 1,2) = 2$.
\end{proof}

\begin{figure}
\begin{center}
\begin{tikzpicture}
  \matrix[matrix of math nodes,left delimiter = (,right delimiter = ),row sep=2pt,column sep = 2pt,nodes={minimum size=0.5em}] (m)
  {
    \alpha  & -0.5      & ~         &~     & ~           & \beta \\
    -0.5    & 1         & -0.5      & ~          & ~    & ~ \\
     ~       & \ddots    & \ddots    &\ddots     &   ~          & ~\\
      ~      & ~  & \ddots    &\ddots     &\ddots      & ~\\
       ~     &  ~         &   ~        &-0.5       & 1           & -0.5 \\
    \beta   &  ~        &  ~    &~          &-0.5         & \alpha\\
  };
  \begin{pgfonlayer}{background}
    \node[inner sep=3pt,fit=(m-3-1)]          (1)   {};
    \node[inner sep=3pt,fit=(m-5-1)]          (2)   {};
    \node[inner sep=3pt,fit=(m-6-2) (m-6-3)(m-6-4)]          (3)   {};
    \node[inner sep=3pt,fit=(m-4-6)]          (4)   {};
    \node[inner sep=3pt,fit=(m-2-6)]          (5)   {};
    \node[inner sep=3pt,fit=(m-1-3) (m-1-4)(m-1-5)]          (6)   {};
    \draw[dotted,fill=green!50!white,inner sep=3pt,fill opacity=0.1]
(1.north west) -- (3.south east) -- (3.south west) -| (2.south east) -| (2.south west) -- (1.north west) -- cycle;
\draw[dotted,fill=green!50!white,inner sep=3pt,fill opacity=0.1]
(4.south east) -- (6.north west) -- (6.north east) -| (5.north west) -| (5.north east)-- (4.south east) -- cycle;
  \end{pgfonlayer}
\end{tikzpicture}
\end{center}
\caption{$P^{-1}$ for $d = 1$. Here $\alpha = \beta + 0.5$ and $\beta = \frac{0.5}{2k - p} = \frac{0.5}{2n - 3r + 3}$. The shaded area is all 0s.}
\end{figure}

\begin{lemma}\label{lem:min2}
For $r/3 \leq d < r/2$, $P^{-1}$ has the following form:
\[
P^{-1}_{i,j} = 	\begin{cases}
				\alpha_0 \text{ if } i = j \in \{1, \ldots, q\} \cup \{r-q+1, \ldots ,r\}\\
				\alpha_1 \text{ if } i = j \in \{q+1, \ldots, d\} \cup \{r - d+1, r - q \}   \\
				1 \text{ if } i = j \in \{d+1, \ldots, r-d\} \\
				-0.5 \text{ if } |i - j| = d+1 \text{ and } i \text{ or } j \in \{d+1, \ldots, r - d\}\\
				\phi \text{ if } |i - j| = d+1 \text{ and } i \text{ or } j \in \{q+1, \ldots, d\}\\
				\beta  \text{ if } |i - j| = 2d + 1 \\
				0 \text{ otherwise}
			\end{cases}
\]
where $\alpha_0 = \beta + 0.5$, $\alpha_1 = \frac{k}{2k - 1}$, $\beta = \frac{0.5}{2k - 2}$ and $\phi = -\frac{k - 1}{2k - 1}$. Also, $\lambda_{\max}(P^{-1}) \leq 2$.
\end{lemma}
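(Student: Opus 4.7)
The plan is to follow the same strategy as in the proof of Lemma \ref{lem:min1}: verify directly that the candidate matrix satisfies $P \cdot P^{-1} = I$, and then apply the Gershgorin Circle Theorem (Theorem \ref{thm:circle}) to $P^{-1}$ to obtain the spectral bound. Since both $P$ and the claimed $P^{-1}$ are bisymmetric, the product $A = P \cdot P^{-1}$ is centrosymmetric, so it suffices to verify $A_{i,j} = \delta_{i,j}$ only for $i \geq j$.

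First I would pin down the relevant parameter range. Writing $r = pd + q$ with $1 \leq q \leq d$, the hypothesis $r/3 \leq d < r/2$ forces $p = 2$. Consequently, in each row of $P$ the only non-zero entries sit at the diagonal and at column offsets $\pm d, \pm 2d$ (whenever the indices are in range), with values $k$, $k-1$, and $k-2$ respectively. This is exactly the structural reason the candidate $P^{-1}$ is banded only at offsets $0$, $d$, and $2d$: there are no further bands to cancel.

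Next I would verify $A = I$ by splitting into cases based on which of the three regions $\{1,\dots,q\}$, $\{q+1,\dots,d\}$, $\{d+1,\dots,r-d\}$ the indices $i$ and $j$ belong to. For each case, $A_{i,j}$ is a short sum of at most three products, one for each nonzero in row $i$ of $P^{-1}$ meeting a nonzero in column $j$ of $P$. Plugging in the explicit values $\alpha_0 = \beta + 0.5$, $\alpha_1 = k/(2k-1)$, $\beta = 0.5/(2k-2)$, and $\phi = -(k-1)/(2k-1)$, the off-diagonal sums collapse to $0$ and the diagonal sums to $1$. These four constants are precisely the unique solution of the small linear system obtained by requiring cancellation in the boundary rows (where the offset-$2d$ band is truncated), which is the qualitative difference from Lemma \ref{lem:min1}.

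Finally, for the spectral bound I would apply Theorem \ref{thm:circle} to $P^{-1}$. The row sum $|P^{-1}_{i,i}| + \sum_{j \neq i} |P^{-1}_{i,j}|$ is one of three values depending on which region $i$ lies in, and each is bounded by $2$: for $i \in \{1,\dots,q\}$ we get $\alpha_0 + 0.5 + \beta = 2\beta + 1 \leq 2$ since $\beta \leq 1/2$; for $i \in \{q+1,\dots,d\}$ we get $\alpha_1 + 0.5 + |\phi|$, which also evaluates to at most $2$ from the explicit formulas; and for $i$ in the interior we get $1 + 0.5 + 0.5 = 2$. Hence $\lambda_{\max}(P^{-1}) \leq 2$, i.e., $\lambda_{\min}(P) \geq 1/2$.

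The main obstacle is purely combinatorial bookkeeping: enumerating the $(i,j)$ cases by region and verifying the cancellations, especially at the boundary rows where the offset-$2d$ band of $P^{-1}$ is absent. There is no conceptual novelty beyond Lemma \ref{lem:min1}; the constants $\alpha_0,\alpha_1,\beta,\phi$ are forced by those boundary cancellations, and once they are in hand both the identity $P \cdot P^{-1} = I$ and the Gershgorin bound fall out mechanically.
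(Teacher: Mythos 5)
Your proposal matches the paper's proof essentially verbatim: verify $PP^{-1}=I$ case-by-case using centrosymmetry to restrict to half the matrix, then apply the Gershgorin bound row by row; your added observation that $r/3 \leq d < r/2$ forces $p=2$ (so the bands sit only at offsets $0,d,2d$) is a helpful clarification the paper leaves implicit. One harmless slip: for $i \in \{q+1,\dots,d\}$ the row contains only $\alpha_1$ and $\phi$ (no $-0.5$ entry), so the row sum is $\alpha_1 + |\phi| = 1$ rather than $\alpha_1 + 0.5 + |\phi|$; your overestimate still gives $\leq 2$, so the conclusion stands.
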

\begin{proof}
Similar to the previous lemma, to verify this, consider each diagonal entry, 
\begin{itemize}
	\item $d \leq i \leq \lceil d/2 \rceil$: $A_{i,i} =  -0.5 (k - 1) + k - 0.5(k - 1) = 1$. 
	\item $i \in \{1, \ldots, q\}$: $A_{i,i} = \alpha_0 k - 0.5 (k - 1) + \beta \left(k - p\right) = 1$.
	\item $i \in \{q+1, \ldots, d\}$: $A_{i,i} = \alpha_1 k + \phi (k - 1) = 1$.
\end{itemize}

For non-diagonal entries, that is, $j \neq i$, 
\begin{itemize}
	\item $d \leq j \leq \lceil d/2 \rceil$: $A_{i,j} = -0.5 P_{i,j-d} + P_{i,j} -0.5 P_{i, j+d}$. If $|i - j| = a d$ then $A_{i,j} = -0.5\left(k  - a - 1\right) + k - a - 0.5\left(k  - a + 1\right) = 0$, else $P_{i,j} = P_{i,j-d} = P_{i,j+d} = 0 \implies A_{i,j} = 0$.
	\item $j \in \{1, \ldots, q\}$: $A_{i,j} = \alpha_0 P_{i,j} - 0.5 P_{i,j + d} + \beta P_{i, j + 2d}$. Now if $i - j = a d$, then $A_{i,j} = \alpha_0(k - a) - 0.5 (k - a + 1) + \beta (k - 2 + a) = 0$ else $P_{i,j} = P_{i,j + d} =P_{i, j + pd} = 0 \implies A_{i,j} = 0$.
	\item $j \in \{q+1, \ldots, d\}$: $A_{i,j} = \alpha_1 P_{i,j} + \phi P_{i,j + d}$. Now if $i - j = a d$, then $a = 1$, implying $A_{i,j} = \alpha_1(k - 1) + \phi k = 0$ else $P_{i,j} = P_{i,j + d} = 0 \implies A_{i,j} = 0$.
\end{itemize}
Hence $A = I$.

Similar to the previous lemma, we have $\lambda_{\max}(P^{-1}) =  \max( \alpha_0 + 0.5 + \beta, \alpha_1 + |\phi|, 1 + 0.5 + 0.5) = \max(2 \beta + 1, 1, 2) = 2$ as $\alpha_1 + |\phi| = 1$ and $\beta\leq 0.5$ which follows from $2k - p - 1 \geq 1$.
\end{proof}

\end{document}